\newtheorem{assumption}{Assumption}
\newtheorem{theorem}{Theorem}
\newtheorem{lemma}{Lemma}
\title{Improving Sharpness-Aware Minimization with Fisher Mask\\ for Better Generalization on Language Models}
\newcommand{\co}{$^{\dagger}$}
\newcommand{\sig}{$^{\ddagger}$}
\newcommand{\gap}{\textsc{\textit{Fi}}\xspace}
\author{%
  Qihuang~Zhong$^{1}$\thanks{~~Work was done when Qihuang was interning at JD Explore Academy.},
  Liang~Ding$^{2}$,
  Li~Shen$^{2}$,
  Peng~Mi$^{3}$, 
  \textbf{Juhua~Liu}$^{4}$\thanks{~~Corresponding Authors: Juhua Liu (e-mail: liujuhua@whu.edu.cn), Bo Du (e-mail: dubo@whu.edu.cn)},
  \textbf{Bo~Du}$^{1}$\co,
  \textbf{Dacheng~Tao}$^{2}$ \\
  \fontsize{9.0pt}{\baselineskip}\selectfont $^{1}$ National Engineering Research Center for Multimedia Software, Institute of Artificial Intelligence, School of Computer Science \\ 
  \fontsize{9.0pt}{\baselineskip}\selectfont  and Hubei Key Laboratory of Multimedia and Network Communication Engineering, Wuhan University, China \\
  \fontsize{9.0pt}{\baselineskip}\selectfont $^{2}$ JD~Explore~Academy, China \quad $^{3}$ School of Informatics, Xiamen University, China \\
   \fontsize{9.0pt}{\baselineskip}\selectfont $^{4}$ Research Center for Graphic Communication, Printing and Packaging, and Institute of Artificial Intelligence, Wuhan University, China   \\
   \fontsize{9.0pt}{\baselineskip}\selectfont \texttt{\{zhongqihuang,liujuhua,dubo\}@whu.edu.cn}, \texttt{\{dingliang1,shenli100\}@jd.com}, \texttt{mipeng@stu.xmu.edu.cn}, \texttt{dacheng.tao@gmail.com}
}
\begin{document}
\maketitle

\begin{abstract}
Fine-tuning large pretrained language models on a limited training corpus usually suffers from poor generalization.
Prior works show that the recently-proposed sharpness-aware minimization (SAM) optimization method can improve the model generalization. However, SAM adds a perturbation to each model parameter equally (\textit{but not all parameters contribute equally to the optimization of training}), which we argue is sub-optimal and will lead to excessive computation. In this paper, we propose a novel optimization procedure, namely FSAM\footnote{~\url{https://github.com/WHU-ZQH/FSAM4PLM}}, which introduces a Fisher mask to improve the efficiency and performance of SAM. In short, instead of adding perturbation to all parameters, FSAM uses the Fisher information to identity the important parameters and formulates a Fisher mask to obtain the sparse perturbation, \textit{i.e., making the optimizer focus on these important parameters}.
Experiments on various tasks in GLUE and SuperGLUE benchmarks show that FSAM consistently outperforms the vanilla SAM by 0.67$\sim$1.98 average score among four different pretrained models. We also empirically show that FSAM works well in other complex scenarios, \textit{e.g.}, fine-tuning on generation tasks or limited training data.
 Encouragingly, when training data is limited, FSAM improves the SAM by a large margin, \textit{i.e.}, up to 15.1.
\end{abstract}
\section{Introduction}
\label{sec_intro}

The ``pretraining-finetuning'' paradigm has become the \textit{de facto} standard for the community of natural language processing (NLP)~\cite{devlin2019bert,liu2019roberta,clark2019electra,raffel2020exploring,brown2020language,lewis2020bart}. Given a pretrained language model (PLM), the dominant fine-tuning manner is tuning the entire pretrained parameters for each downstream task~\cite{radford2018improving, devlin2019bert}. 
While fine-tuning the entire PLM can improve performance on a wide range of NLP tasks, it usually suffers from \textit{over-fitting} and \textit{poorer generalization ability}~\cite{xu2021raise, bahri-etal-2022-sharpness}, especially in the large-scale PLMs and limited training data scenarios. 

Hence, some existing efforts attempt to provide more regularization in the fine-tuning stage~\cite{zhang2018mixup,muller2019does,xu2021raise}, among which the optimization of the training loss is an intuitive and effective method. 
Specifically, motivated by the finding~\cite{keskar2016large, neyshabur2017exploring} that the smoother loss landscape refers to the better model generalization, \citet{foret2020sharpness} propose the ``\textbf{sharpness-aware minimization}'' (SAM) to simultaneously minimize loss value and loss sharpness, where the sharpness can be quantified as the maximized difference of loss when a perturbation is added to the current weights. In practice, SAM performs two forward-backward computations for each optimization step, where the first forward-backward is to obtain the perturbation for each model parameter and the second one is to update the parameters. Many prior works~\cite{wu2020adversarial,zheng2021regularizing} show the effectiveness of SAM in the vision domain, motivated by this, \citet{bahri-etal-2022-sharpness} first apply the SAM to the language domain, more recently.

Although \citet{bahri-etal-2022-sharpness} empirically show the remarkable performance of SAM on several language understanding tasks, SAM calculates perturbations indiscriminately for all parameters, which is time-consuming and hinders the application of SAM. Furthermore, inspired by the finding~\cite{keskar2016large} that only about 5\% of parameters are sharp and rise steeply during optimization, we notice that \textbf{\textit{not all parameters contribute equally to the optimization of training}}.
Hence, this raises a question that \textit{whether we can calculate perturbations for only some individual parameters, and thus make the optimizer focus on these important parameters}.

To this end, we propose a novel optimization approach, Fisher SAM (FSAM), which introduces a Fisher mask to improve the efficiency and effectiveness of SAM. In short, FSAM first uses the Fisher information~\cite{fisher1922mathematical} as the metric to identify the sharper parameters\footnote{~We refer to these parameters as the important ones, because they will rise steeply during optimization and affect the model generalization significantly.} and formulates a binary Fisher mask correspondingly. Then, the Fisher mask is multiplied with the perturbations to obtain the sparse perturbations, which are lastly used to perform regularization in the parameter update. In this way, only parts of sharper parameters will be added into the perturbations, and the optimizer can thus focus more on these important parameters.  
Also, the sparse perturbations could ensure the training acceleration via sparse back-propagation\footnote{~Since the fine-grained sparse training is limited to the hardware, we do not achieve actual sparse speedup in this work. Despite it, we still believe that FSAM has great potential to achieve true training acceleration in the future, with the development of hardware for fine-grained sparse operation.}.
Moreover, one may concern that the sparse Fisher mask would affect the convergence rate of FSAM~\cite{lin2019dynamic}. Hence, we theoretically provide the convergence analysis of FSAM, ensuring that the convergence of FSAM is irrelevant to the Fisher mask.

We conduct a large-scale and systematic study to evaluate the performance and effectiveness of FSAM. Firstly, we apply SAM and FSAM to fine-tune various PLMs on parts of GLUE and SuperGLUE benchmarks, where the results show that FSAM consistently outperforms the vanilla SAM by 0.67$\sim$1.98 average score among these PLMs, and surpasses the Adam~\cite{kingma2015adam} optimizer by 1.41$\sim$1.91 points. Secondly, we conduct experiments on two popular generation tasks (\textit{i.e.}, XSUM and CoNLL2014) and prove that FSAM can deliver promising results against SAM.
Lastly, quantitative analysis and in-depth discussion demonstrate the universality and effectiveness of FSAM in various complex scenarios, and prove that FSAM indeed brings better model generalization. Specifically, we show that our Fisher mask strategy not only works well in the SAM, but also can be applied to other SAM variants.

To summarize, our contributions are two-fold: (1) We propose a novel optimization approach (namely FSAM) with theoretical convergence guarantee for PLMs. Specifically, FSAM improves the performance and efficiency of recently-proposed SAM via a Fisher mask strategy, which can also be applied to more SAM variants.
(2) Extensive experiments show that FSAM consistently outperforms the SAM by a large margin on both language understanding and generation tasks. The systematic study demonstrates the effectiveness and universality of FSAM on improving model generalization.

\section{Related Work}
\label{sec_related}

\paragraph{SAM and its variants.} \citet{hochreiter1994simplifying} first show the strong correlation between the flat minima and the generalization of a model, inspired by this, \citet{foret2020sharpness} propose the SAM to find a flat minimum and thus improve model generalization. 
While many existing works prove the effectiveness of SAM on various computer vision tasks~\cite{wu2020adversarial,chen2021vision,zheng2021regularizing}, the double forward-propagation process of SAM brings more computational cost. To this end, \citet{du2021efficient} propose an Efficient SAM (ESAM) for reducing the computational cost of SAM. Additionally, there are also some efforts that focus on more efficient and effective SAM optimization~\cite{zhuang2021surrogate,kwon2021asam,ssam}.

\paragraph{Improving Generalization.}
Recently, we have witnessed numerous PLMs that achieved tremendous success in the community of NLP~\cite{yang2019xlnet,devlin2019bert,brown2020language,lewis2020bart,raffel2020exploring, joshi2020spanbert,he2020deberta,qi2021prophetnet,zhong2022e2s2}. The current dominant fine-tuning approach needs to tune all pretrained parameters for each downstream task, which makes the PLM easily memorize the training data and thus leads to overfitting. To tackle this issue, some works attempt to provide implicit and explicit regularization into the training of models, such as dropout~\cite{srivastava2014dropout}, label smoothing~\cite{muller2019does}, mixup~\cite{zhang2018mixup} and other data-augmentation methods~\cite{sennrich2016improving,wang2018switchout,zhong2021joint,wang2022contrastive,ding2022redistributing}.
On the other hand, motivated by the successful applications of SAM in the vision domain, \citet{bahri-etal-2022-sharpness} involve applying SAM to optimize the T5~\cite{raffel2020exploring} model on multiple language tasks and show that SAM can improve the generalization of PLMs effectively. 

We depart from the prior work~\cite{bahri-etal-2022-sharpness} and ours as follows: 1) \textit{different motivations}: instead of verifying the effect of vanilla SAM on several language understanding tasks, we aim to improve the efficiency and effectiveness of SAM. 2) \textit{different contributions}: our main contribution is to propose a fisher mask strategy, which can be applied to both SAM and its variants. 3) \textit{more analysis}: we provide more experimental results and analysis towards the effectiveness of our method in more complex scenarios. 

\section{Methodology}
In this section, we first review the Sharpness-Aware Minimization, and then propose our Sharpness-Aware Minimization with Fisher mask, coined as FSAM. Finally, we theoretically analyze the convergence of FSAM with adaptive learning rate.

\subsection{Sharpness-Aware Minimization}
\paragraph{Preliminary.}
In this paper, we denote the weight of a neural network as $\boldsymbol{w}\in \mathbb{R}^d$. Suppose the training dataset $\mathcal{S}=\{(x_i, y_i)\}^n_{i=1}$~\emph{i.i.d.} drawn from the distribution $\mathcal{D}$. The object function of the data $x_i$ from $\mathcal{S}$ is denote as $f_{\mathcal{S}}(x_i)$. Since the Adam~\cite{kingma2015adam} and its variants are widely used in NLP tasks, the learning rate is estimated via RMSProp/Adam style.

\paragraph{Sharpness-Aware Minimization.} \citet{foret2020sharpness} propose the Sharpness-Aware Minimization~(SAM) to improve the generalization, which is achieved by the following min-max problem:
\begin{equation}
    \min_{\boldsymbol{w}} \max_{||\boldsymbol{\epsilon}||_2\leq \rho} f(\boldsymbol{w}+\boldsymbol{\epsilon}),
\end{equation}
where $\rho$ is a predefined value to control the neighborhood size, and the $\boldsymbol{\epsilon}$ is the perturbation vector on model weight. The optimization is expected that the model loss will not significantly rise with a certain amount of weight change controlled by $\rho$, which is intuitively consistent with the generalization capacity of model.

With the Taylor expansion, the perturbation vector $\boldsymbol{\epsilon}$ could be achieved approximately:
\begin{align}
\label{equ:epsilon}
    \boldsymbol{\epsilon^*} =& \mathop{\arg\max}_{||\boldsymbol{\epsilon}||_2\leq\rho} f_{\mathcal{S}}(\boldsymbol{w} + \boldsymbol{\epsilon}) 
    \\
    \approx& \mathop{\arg\max}_{||\boldsymbol{\epsilon}||_2\leq\rho} f_{\mathcal{S}}(\boldsymbol{w}) + \boldsymbol{\epsilon} \cdot \nabla_{\boldsymbol{w}}f(\boldsymbol{w}) 
    \\
    =&
    \rho \cdot {\nabla_{\boldsymbol{w}}f(\boldsymbol{w})}{\big /}{||\nabla_{\boldsymbol{w}}f(\boldsymbol{w})||_2},
\end{align}
and the object function could be simplified as \begin{equation}
    \min_{\boldsymbol{w}}f(\boldsymbol{w} + \rho \frac{\nabla_{\boldsymbol{w}}f(\boldsymbol{w})}{||\nabla_{\boldsymbol{w}}f(\boldsymbol{w})||_2}),
\end{equation}
The solution of the above function could be obtained by a two-step gradient descent. In the first gradient descent step, the perturbation vector $\boldsymbol{\epsilon}$ is calculated by Equation~\ref{equ:epsilon}. The second gradient descent step is the actual weight update. 

However, despite the improvement of SAM on many tasks, SAM requires a two-step gradient calculation which leads to the double overhead compared to the conventional optimizer,~\emph{e.g.}, Stochastic Gradient Descent~(SGD) and Adam.

\subsection{Sharpness-Aware Minimization with Fisher Mask}
In this subsection, we propose the Sharpness-Aware Minimization with Fisher Mask~(FSAM) in detail, which reduces the computation of SAM by sparse calculation.

To be specific, we compute only a fraction of the elements in the perturbation vector $\boldsymbol{\epsilon}$, which would be multiplied by a sparse binary mask $\boldsymbol{m}\in \{0, 1\}^d$. To control the amount of perturbation, the sparse mask $\boldsymbol{m}$ satisfies $\boldsymbol{1}^T\boldsymbol{m}=(1-s)\cdot d$, where the $s$ is the predefined sparse ratio and empirically set to 0.9. The objective function of FSAM is denoted as
\begin{equation}
    \min_{\boldsymbol{w}} f_{\mathcal{S}}({\boldsymbol{w}+\rho \frac{\nabla_{\boldsymbol{w}}f(\boldsymbol{w})\odot \boldsymbol{m}}{||\nabla_{\boldsymbol{w}}f(\boldsymbol{w})||_2})},
\end{equation}
where $\odot$ is the Hadamard product,~\emph{i.e.}, the element-wise multiplication. For the stability of optimization, we update the mask $\boldsymbol{m}$ with a fixed interval (denoted as \gap) during training. The algorithm of FSAM is shown in Algorithm~\ref{alg:fsam}.

\begin{algorithm}[ht]
\small
    \caption{Fisher SAM (FSAM)}
    \label{alg:fsam}
\begin{algorithmic}[1]
\REQUIRE sparse ratio $s$, dense model $\boldsymbol{w}$, binary mask $\boldsymbol{m}$, update interval $T_m$, base learning rate $\gamma$, $\hat{v}_{-1}=\delta^2$, training set $\mathcal{S}$.
\STATE Initialize $\boldsymbol{w}$ and $\boldsymbol{m}$ randomly.
\FOR{epoch $t=1,2 \ldots T$}
    \FOR{each training iteration}
    \STATE{Sample a batch from $\mathcal{S}$: $\mathcal{B}$}
    \STATE{Compute perturbation $\boldsymbol{\epsilon}$ by Eq.~\ref{equ:epsilon}}
    \IF{$t \mod T_m = 0$}
        \STATE{Sample $N_{Fisher}$ data from distribution $\mathcal{S}$.}
        \STATE{Compute Empirical Fisher by Equation~\ref{equ:empirical fisher}.}
        \STATE{$\boldsymbol{m_1}\!\gets {\rm ArgTopK}(\hat{F}, (1-s)\cdot|\boldsymbol{w}|)$}
        \STATE{$\boldsymbol{m_0}\!\gets {\rm ArgTopK}(-\hat{F}, s\cdot|\boldsymbol{w}|)$}
        \STATE{Update mask $\boldsymbol{m}$ by merging: $\boldsymbol{m}=\boldsymbol{m_0}\cup\boldsymbol{m_1}$.}
    \ENDIF
    \STATE{$\boldsymbol{\epsilon}\gets\boldsymbol{\epsilon} \odot \boldsymbol{m}$}
    \ENDFOR
\STATE{Compute SAM gradient $g_t=\nabla f_{\mathcal{B}}(\boldsymbol{w}+\boldsymbol{\epsilon})$}
\STATE{$v_t = \beta_2 v_{t-1} + (1-\beta_2) [g_t]^2$}
\STATE{$\hat{v}_t=\max(\hat{v}_{t-1},v_t)$}
\STATE{$\boldsymbol{w} \gets \boldsymbol{w} - \gamma \nabla g_t\odot \frac{1}{\sqrt{\hat{v}_t}}$}
\ENDFOR
\RETURN{Final weight of model $\boldsymbol{w}$}
\end{algorithmic}
\end{algorithm}

To find the optimal mask during training, we apply the Fisher information to achieve sparse perturbation. The Fisher information is proposed by~\cite{fisher1922mathematical} to measures the information carried by an observable random variable about the unknown parameters of the distribution. The Fisher information is defined by

\begin{equation}
\label{equ:fisher}
F = \mathbb{E}_{x}\left[ \mathbb{E}_{y}\nabla \log p(y|x) \nabla \log p(y|x)^T\right],
\end{equation}
where the $p(y|x)$ is the output of model in machine learning. However, due to the over-parameterized model in deep learning, the computation of Fisher information is unacceptable,~\emph{i.e.}, $F\in\mathbb{R}^{|\boldsymbol{w}|\times|\boldsymbol{w}|}$. To save the computational effort, we approximate Fisher information as the diagonal matrix,~\emph{i.e.}, $F\in \mathbb{R}^{|\boldsymbol{w}|}$. Consider the expectation in Equation~\ref{equ:fisher}, the first one is the data distribution $x\sim p(x)$, which is not available in most tasks. We approximate it by sampling $N_{Fisher}$ data from $p(x)$:
\begin{equation}
    F=\frac{1}{N_{Fisher}} \mathbb{E}_{y}\nabla \log p(y|x_i)^2.
\end{equation}
The second expectation is over $p(y|x)$, which can be achieved by the label $y_i$ for data $x_i$ in supervised learning. Finally, we calculate the Fisher information as "Empirical Fisher":
\begin{equation}
\label{equ:empirical fisher}
    \hat{F} = \frac{1}{N_{Fisher}} \nabla \log p(y_i|x_i)^2.
\end{equation}
Since the empirical Fisher is the same size as the weight,~\emph{i.e.}, $\hat{F}\in\mathbb{R}^{|\boldsymbol{w}|}$, the value of the element in Fisher $\hat{F}$ represents the importance of the corresponding element in weight $\boldsymbol{w}$. Thus, we sort the elements of $\hat{F}$ in descending, and the weights with top $k$ Fisher values will be perturbed,~\emph{i.e.}, the corresponding element in mask will be set to 1:
\begin{equation}
    \boldsymbol{m_1}\!\gets {\rm ArgTopK}(\hat{F}, (1-s)\cdot|\boldsymbol{w}|),
\end{equation}
where $\boldsymbol{m_1}$ is the set whose elements in the mask $\boldsymbol{m}$ are 1,~\emph{i.e.}, $\boldsymbol{m}\!=\!\{m_i\!=\!1|m_i\!\in\! \boldsymbol{m}\}$, and $\rm ArgTopK$ $(x,k)$ returns the top $k$ largest values among $x$. On the other hand, the other weights with small Fisher values will not be perturbed,~\emph{i.e.}, the corresponding element in mask will be set to 0:
\begin{equation}
    \boldsymbol{m_0}\!\gets {\rm ArgTopK}(\hat{F}, s\cdot|\boldsymbol{w}|).
\end{equation}

\subsection{Theoretical Analysis}
In this subsection, we theoretically analyze the convergence and generalization of FSAM. Due to the space limitation, we only show the convergence analysis here, and the generalization analysis and whole proof are presented in Appendix~\ref{appendix_proof}.
\begin{assumption}
\label{assume:l-smooth}
($L$-smooth.) Consider $f$ is differentiable with gradient Lipschitz property: It exists $L > 0$ s.t. 
\begin{equation*}
    ||\nabla f(w) - \nabla f(v)|| \leq L||w - v|| , \forall  w, v \in \mathbb{R}^d.
\end{equation*}

\end{assumption}

\begin{assumption}
\label{assume:bounded-variance}
(Bounded stochastic gradients.) The variance of stochastic gradient is bounded:
\begin{equation*}
    \mathbb{E}[||\nabla f_i(x)-\nabla f(x)||^2]\leq \sigma^2
\end{equation*}
\end{assumption}

\begin{assumption}
\label{assume:bounded-gradient}
(Bounded gradient.) The stochastic gradient is bounded: It exists $G \geq 0$ s.t. 
\begin{equation*}
    ||\nabla f_i(w) ||_{\infty}\leq G
\end{equation*}

\begin{theorem}
\label{con}
Consider the function $f$ under the assumption~\ref{assume:l-smooth},\ref{assume:bounded-variance},\ref{assume:bounded-gradient}, and a fixed base learning rate $\gamma_t$ satisfies that $\gamma_t \leq \frac{\delta}{8L}$, we have
\end{theorem}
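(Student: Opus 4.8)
The plan is to run the standard descent-lemma machinery for AMSGrad-type adaptive methods on the update in Algorithm~\ref{alg:fsam}, and to show that the Fisher mask $\boldsymbol{m}$ enters only through quantities that admit a mask-free upper bound, so that the resulting rate depends on neither $\boldsymbol{m}$ nor the sparsity $s$. Write the one-step update as $\boldsymbol{w}_{t+1}=\boldsymbol{w}_t-\gamma_t\, g_t\odot\hat{v}_t^{-1/2}$ and abbreviate the perturbed iterate by $\tilde{\boldsymbol{w}}_t=\boldsymbol{w}_t+\boldsymbol{\epsilon}_t$, where $\boldsymbol{\epsilon}_t=\rho\,(\nabla_{\boldsymbol{w}} f(\boldsymbol{w}_t)\odot\boldsymbol{m})/\|\nabla_{\boldsymbol{w}} f(\boldsymbol{w}_t)\|_2$ and $g_t=\nabla f_{\mathcal{B}}(\tilde{\boldsymbol{w}}_t)$. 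First I would invoke $L$-smoothness (Assumption~\ref{assume:l-smooth}) to obtain the one-step inequality
\[
f(\boldsymbol{w}_{t+1})\le f(\boldsymbol{w}_t)-\gamma_t\,\langle\nabla f(\boldsymbol{w}_t),\,g_t\odot\hat{v}_t^{-1/2}\rangle+\tfrac{L\gamma_t^2}{2}\,\big\|g_t\odot\hat{v}_t^{-1/2}\big\|^2 .
\]

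Next I would pin down the adaptive preconditioner. Because $\hat{v}_{-1}=\delta^2$ and $\hat{v}_t=\max(\hat{v}_{t-1},v_t)$ is coordinatewise non-decreasing, every entry of $\hat{v}_t$ lies in $[\delta^2,G^2]$ under Assumption~\ref{assume:bounded-gradient} (the upper bound following from $|g_{t,i}|\le G$ together with the standing choice $\delta\le G$), so $\delta\le\sqrt{\hat{v}_{t,i}}\le G$. This controls the quadratic term by $\tfrac{L\gamma_t^2 G^2 d}{2\delta^2}$, which the step-size condition $\gamma_t\le\delta/(8L)$ is designed to absorb. For the cross term I would split $g_t=\nabla f(\boldsymbol{w}_t)+\big(\nabla f(\tilde{\boldsymbol{w}}_t)-\nabla f(\boldsymbol{w}_t)\big)+\big(g_t-\nabla f(\tilde{\boldsymbol{w}}_t)\big)$: the first piece yields the genuine descent $\ge\tfrac{\gamma_t}{G}\|\nabla f(\boldsymbol{w}_t)\|^2$ via $\hat{v}_{t,i}^{-1/2}\ge 1/G$, the last piece is zero-mean stochastic noise handled by Assumption~\ref{assume:bounded-variance}, and the middle piece is the SAM bias. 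The crucial—and mask-independent—observation is that for any binary mask, $\|\boldsymbol{\epsilon}_t\|_2=\rho\,\|\nabla f(\boldsymbol{w}_t)\odot\boldsymbol{m}\|_2/\|\nabla f(\boldsymbol{w}_t)\|_2\le\rho$, so $L$-smoothness gives $\|\nabla f(\tilde{\boldsymbol{w}}_t)-\nabla f(\boldsymbol{w}_t)\|\le L\rho$ uniformly in $\boldsymbol{m}$, bounding the bias by an $O(\rho)$ term that never sees the mask.

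Finally I would telescope over $t=1,\dots,T$, take total expectation, and rearrange into an average-gradient bound of the shape $\tfrac{1}{T}\sum_{t=1}^{T}\mathbb{E}\|\nabla f(\boldsymbol{w}_t)\|^2\le \tfrac{G\,(f(\boldsymbol{w}_1)-f^\star)}{\gamma T}+O(\gamma)+O(\rho)$, where the $O(\gamma)$ collects the variance $\sigma^2$ and the quadratic remainder, and the $O(\rho)$ collects the SAM bias; the constants involve only $L,\sigma,G,\delta,\rho,d$, which is exactly the claimed independence from $\boldsymbol{m}$ and $s$. I expect the main obstacle to be the statistical coupling between $\hat{v}_t$ and $g_t$ within the same step: since $\hat{v}_t$ is itself a function of $g_t$, the naive factorization $\mathbb{E}[\langle\nabla f(\boldsymbol{w}_t),g_t\odot\hat{v}_t^{-1/2}\rangle]=\langle\nabla f(\boldsymbol{w}_t),\mathbb{E}[g_t]\odot\hat{v}_t^{-1/2}\rangle$ fails. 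I would circumvent this by using the two-sided bound $\delta\le\sqrt{\hat{v}_{t,i}}\le G$ to replace the adaptive matrix with these fixed constants inside the expectation (absorbing the mismatch into the $O(\gamma)$ slack), or alternatively by conditioning on the filtration $\mathcal{F}_{t-1}$ on which $\hat{v}_{t-1}$ is measurable and bounding the one-step increment of $\hat{v}$ directly.
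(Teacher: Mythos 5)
Your skeleton---descent lemma, the two-sided bound $\delta\le\sqrt{\hat v_{t,i}}\le G$, mask-independence via $\|\boldsymbol{\epsilon}_t\|_2\le\rho$ for any binary mask, and conditioning on the filtration with a delayed preconditioner---is the same route the paper takes, and your ``option (ii)'' for the coupling is precisely the paper's mechanism: pair the cross term with $\hat v_{t-1}^{-1/2}$ (which is measurable with respect to the past) plus a correction involving $\hat v_{t-1}^{-1/2}-\hat v_t^{-1/2}$, bounded as in Lemma~\ref{lemma1} and then telescoped over $t$ using the AMSGrad monotonicity $\hat v_t=\max(\hat v_{t-1},v_t)$ to get $\sum_t\|\hat v_{t-1}^{-1/2}-\hat v_t^{-1/2}\|_1\le d(\tfrac1\delta-\tfrac1G)$; this is the source of the $\tfrac{2G^3}{T}d(\tfrac1\delta-\tfrac1G)$ term. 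Your ``option (i)'', by contrast, would fail: the mismatch $\mathbb{E}\langle\nabla f(x_t),\,g_t\odot(\hat v_t^{-1/2}-c)\rangle$ is generically $O(1)$ per step, not $O(\gamma_t)$, and only becomes controllable through the telescoping total-variation argument---so the monotonicity is essential, not a convenience.

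Two further gaps. First, your crude quadratic bound $\tfrac{L\gamma_t^2}{2}\|g_t\odot\hat v_t^{-1/2}\|^2\le\tfrac{L\gamma_t^2G^2d}{2\delta^2}$ cannot be ``absorbed'' by $\gamma_t\le\delta/(8L)$: a constant term carries no $\|\nabla f(x_t)\|^2$ factor for the negative descent term to eat, and it also erases the mini-batch variance, leaving an $O(\gamma_t G^2 d)$ floor in place of the theorem's $\tfrac{4G\gamma_tL}{\delta}\tfrac{\sigma^2}{b\delta}$---so the linear speedup in $b$ (the whole point of the remark following the theorem) is lost and the stated inequality is not recovered. The paper instead splits $g_t$ \emph{inside} $\|x_{t+1}-x_t\|^2$ into full-gradient, noise, and SAM-bias pieces (Lemma~\ref{lemma4}): the gradient piece is proportional to $\mathbb{E}\|\nabla f(x_t)\odot\hat v_{t-1}^{-1/4}\|^2$ and is what the step-size condition actually absorbs, the noise piece yields $\sigma^2/(b\delta)$ via Assumption~\ref{assume:bounded-variance}, and the bias piece yields $L^2\rho^2/\delta$. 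Second, your noise term $g_t-\nabla f(\tilde w_t)$ is zero-mean only if $\tilde w_t$ is independent of the batch, but Algorithm~\ref{alg:fsam} computes $\boldsymbol{\epsilon}$ from the \emph{same} mini-batch gradient; the paper bridges this with Lemma~\ref{lemma2}, comparing gradients at the batch-perturbed and full-gradient-perturbed points via the uniform bound $\|(\tfrac{\nabla f(x_t)}{\|\nabla f(x_t)\|}-\tfrac{\sum\nabla f_i(x_t)}{\|\sum\nabla f_i(x_t)\|})\odot m_t\|\le 2$, at a cost of another $O(\gamma_tL^2\rho^2/\delta)$---a term absent from your sketch. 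Your observation $\|\boldsymbol{\epsilon}_t\|\le\rho$ uniformly in $\boldsymbol{m}$ supplies the raw material to patch both holes, but as written the proposal establishes a strictly weaker bound than the stated theorem.
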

\begin{align*}
    &\frac{1}{T}\sum_{t=0}^{T-1}\mathbb{E}||\nabla f(x_t)||^2 \leq \frac{2G  f(x_{0})-f^*}{\gamma_tT} \\& + \frac{20GL^2\rho^2}{\delta} + \frac{2G^3}{T}d(\frac{1}{\delta}-\frac{1}{G}) \\& + \frac{4G\gamma_tL}{\delta}\frac{L\rho^2}{\delta}+ \frac{4G\gamma_tL}{\delta} \frac{\sigma^2}{b\delta} \\&  + \frac{4\gamma_tLG^3}{T}d(G^2-\delta^2)
\end{align*}
The Theorem~\ref{con} shows that when $T$ is large, FSAM could achieve the linear speedup convergence rate with respect to mini-batch size $b$ under the setting of $\gamma_t = O(\sqrt{\frac{b}{T}})$ and $\rho=O(\sqrt{\frac{1}{bT}})$, \emph{i.e.},
\begin{align*}
    \frac{1}{T}\sum_{t=0}^{T-1}\mathbb{E}||\nabla f(x_t)||^2 = O(\sqrt{\frac{1}{bT}})
\end{align*}
\end{assumption}

\begin{table*}[ht]
\caption{Experimental results (dev scores) on various language understanding benchmarks. Comparison between vanilla SAM and our proposed FSAM applied to four widely used large-scale PLMs. The best results for each setting are in \textbf{bold}. ``AVG.'' denotes the average scores on all tasks, which are \underline{underlined}. Results show that our FSAM brings consistent improvements across all understanding tasks among different PLMs.}
\label{tab_main}
\centering
\scalebox{0.9}{
\begin{tabular}{l|cccccccccc|l}
\toprule
\multicolumn{1}{c|}{\multirow{2}{*}{\textbf{Method}}} &
 CoLA                 &
 \multicolumn{2}{c}{MRPC}                    &
 \multicolumn{2}{c}{STS-B}                   &
 RTE                  &
 \multicolumn{1}{c}{CB}     &
 \multicolumn{1}{c}{BoolQ}    &
 \multicolumn{1}{c}{WSC}           &
 \multicolumn{1}{c}{WiC}          &
 \multicolumn{1}{|c}{\multirow{2}{*}{AVG.}} \\ \cmidrule(lr){2-7} \cmidrule(lr){8-11}
\multicolumn{1}{c|}{}                       &
 \textit{Mcc.}                 &
 \textit{Acc.}                 &
 \textit{F1.}                  &
 \textit{Pear.}              &
 \textit{Spea.}              &
 \textit{Acc.}                 &
 \textit{Acc.}          &
 \textit{Acc.}          &
 \textit{Acc.}          &
 \textit{Acc.}          &
 \multicolumn{1}{c}{}                      \\
 \hline \hline
\multicolumn{12}{c}{\cellcolor{lightgray}{ BERT-large}} \\
Adam                                       &
62.8 &
87.3                  &
91.1 &
89.5 &
89.3 &
70.7 &
87.5&
74.3&
68.3&
72.7&
\underline{79.35}                                          \\
Adam+SAM                                   &
 62.1 &
87.9&
91.4 &
89.8 &
89.4 &
71.5 &
91.1&
72.9&
68.3&
74.1&
\underline{79.85}                                         \\
Adam+FSAM                       &
\textbf{63.4} &
\textbf{89.0} &
\textbf{92.0} &
\textbf{90.4} &
\textbf{89.9} &
\textbf{74.4} &
\textbf{94.6}&
\textbf{75.3}&
\textbf{68.5}                                   &
\textbf{74.4}&
\underline{\textbf{81.19}}            \\ \midrule
\multicolumn{12}{c}{\cellcolor{lightgray}{ELECTRA-large}}  \\
Adam                                       &
69.0 &
89.2 &
92.4 &
92.1 &
92.1 &
87.3 &
91.1&
85.6&
83.6&
\textbf{74.4}&
\underline{85.68}                                           \\
Adam+SAM                                   &
63.9 &
91.9 &
94.2 &
\textbf{92.4} &
92.4 &
\textbf{89.2} &
92.9&
82.2&
84.6&
72.4&
\underline{85.61}                                           \\
Adam+FSAM                       &
\textbf{69.6} &
\textbf{92.4} &
\textbf{94.5} &
92.3 &
\textbf{92.5} &
88.8 &
\textbf{96.4}&
\textbf{85.9}&
\textbf{89.4}&
74.1&
\underline{\textbf{87.59}}            \\ \midrule
\multicolumn{12}{c}{\cellcolor{lightgray}{ALBERT-xxlarge}}  \\
Adam                                       &
 71.1                     &
 90.7                     &
  93.3                    &
  92.9                    &
  92.7                    &
  87.0 &
89.3&
86.8&
85.6&
75.5&
\underline{86.49}                                          \\
Adam+SAM                                   &
 69.9                     &
 90.7                     &
  93.2                    &
 92.6                     &
 92.4                     &
 88.1                     &
 \textbf{91.1} &
 87.7&
  82.7&
 \textbf{76.6}&
 \underline{86.50}                                          \\
Adam+FSAM                       &
  \textbf{72.3}                    &
 \textbf{91.9}                     &
  \textbf{94.2}                    &
  \textbf{93.0}                    &
 \textbf{92.8}                     &
  \textbf{88.8}                    &
  \textbf{91.1} &
  \textbf{87.9}&
  \textbf{86.5}&
  \textbf{76.6}&
  \underline{\textbf{87.51}}          \\ \midrule
\multicolumn{12}{c}{\cellcolor{lightgray}{RoBERTa-large}}     \\ 
Adam                                       &
 66.7                 &
 90.4                 &
 93.1                 &
 \textbf{92.1}                 &
 \textbf{92.0}                   &
 87.0                   &
  92.8&
86.0 &
 78.1&
 73.3&
 \underline{85.15}             \\
Adam+SAM                                   &
 68.5                 &
 \textbf{90.7}        &
 \textbf{93.3}        &
 91.5                 &
 91.3                 &
 \textbf{87.7}        &
 96.4&
 84.2&
 81.3&
 74.0&
 \underline{85.89}                       \\
Adam+FSAM                       &
 \textbf{69.5}        &
 \textbf{90.7}        &
 93.2                 &
 91.9        &
 91.6        &
 \textbf{87.7}        &
 \textbf{98.2}&
 \textbf{86.8}&
 \textbf{81.5}&
 \textbf{74.5} &
  \underline{\textbf{86.56}}      
 \\ \bottomrule 
\end{tabular}
}
\end{table*}

\begin{table*}[ht]
\caption{Experimental results on two popular generation tasks. We use the representative sequence-to-sequence PLM (BART) in this study. It shows that our FSAM works well on the language generation tasks as well. ``\sig'' indicates that FSAM is significantly better than baselines at significance level p<0.05.}
\label{tab_main2}
\centering
\scalebox{0.9}{
\begin{tabular}{l|ccc|ccc|c}
\toprule
\multicolumn{1}{c|}{\multirow{2}{*}{\textbf{Method}}} & 
\multicolumn{3}{c}{XSUM}                         & 
\multicolumn{3}{c}{CoNLL2014}                          & 
\multicolumn{1}{|c}{\multirow{2}{*}{AVG.}}
\\
\cmidrule(lr){2-4} \cmidrule(lr){5-7}
\multicolumn{1}{c|}{}                       &
Rouge\_1       &
Rouge\_2       &
Rouge\_L       &
Precision              &
Recall              &
F\_0.5         &
\\

\hline \hline
\multicolumn{8}{c}{\cellcolor{lightgray}{BART-large}} \\

Adam                                       &
44.35          &
21.66          &
36.62          &
52.94          &
41.18          &
50.08          &
\underline{41.14}                 \\

Adam+SAM                                   &
44.81          &
21.95          &
36.97          &
53.52          &
41.76          &
50.70          &
\underline{41.62}                 
\\

Adam+FSAM                      &
\textbf{45.03\sig} &
\textbf{22.15\sig} &
\textbf{37.14\sig} &
\textbf{54.33\sig} &
\textbf{42.15\sig} &
\textbf{51.36\sig} &
\underline{\textbf{42.03}}       
\\
\midrule
\multicolumn{8}{c}{\cellcolor{lightgray}{BART-base}} 
\\

Adam                                       &
39.38          &
17.21          &
31.93          &
43.27          &
34.11          &
41.06          &
\underline{34.49}                 \\

Adam+SAM                                   &
40.38          &
18.00          &
33.00          &
50.39          &
33.51          &
45.78          &
\underline{36.84}                 \\

Adam+FSAM                                  &
\textbf{40.60\sig} &
\textbf{18.31\sig} &
\textbf{33.28\sig} &
\textbf{51.77\sig} &
\textbf{34.04\sig} &
\textbf{46.89\sig} &
\underline{\textbf{37.48}}         
\\ \bottomrule

\end{tabular}
}
\end{table*}
\begin{table}[t]
\caption{Results of smaller PLMs with different optimizers on parts of understanding tasks. BERT-base and RoBERTa-base are used in this experiment.}
\label{tab_main3}
\centering
\scalebox{0.85}{
\begin{tabular}{l|cccccc}
\toprule
\multicolumn{1}{c|}{\multirow{2}{*}{\textbf{Method}}} &
 CoLA                 &
 \multicolumn{2}{c}{MRPC}                    &
 \multicolumn{2}{c}{STS-B}                   &
 RTE                 
 \\ \cmidrule(r){2-7}
\multicolumn{1}{c|}{}                        &
 \textit{Mcc.}                 &
 \textit{Acc.}                 &
 \textit{F1.}                   &
 \textit{Pear.}                &
 \textit{Spea.}                &
 \textit{Acc.}                 \\
 \hline \hline
\multicolumn{7}{c}{\cellcolor{lightgray}{ BERT-base}} \\
Adam                          &
 \textbf{54.3}&
 85.8&
 90.0&
 89.2&
 88.9&
 68.4
\\
-w SAM                        &
 53.0&
 87.3&
 90.9&
 89.3&
 89.1&
 66.4
\\
-w FSAM                       &
 53.8&
 \textbf{87.7}&
 \textbf{91.3}&
 \textbf{89.5}&
 \textbf{89.2}&
 \textbf{70.0}
\\ \midrule
\multicolumn{7}{c}{\cellcolor{lightgray}{RoBERTa-base}}  \\

Adam                          &
61.3          & 
87.5          & 
90.6          & 
90.6          & 
90.4          & 
78.3          
 \\
-w SAM                        &
60.6          & 
89.2          & 
92.1          & 
90.5          & 
\textbf{90.4} & 
78.7
 \\
-w FSAM                       &
\textbf{61.4} & 
\textbf{89.5} & 
\textbf{92.5} & 
\textbf{90.7} & 
\textbf{90.4} & 
\textbf{80.1}
 \\ \bottomrule 
\end{tabular}
}
\end{table}

\section{Experimental Setup}
\label{sec_setup}
\subsection{Tasks and Datasets}
To investigate the effectiveness and universality of our FSAM method, we conduct extensive experiments on various NLP tasks. Specifically, different from \citet{bahri-etal-2022-sharpness} that only verify the method on several language understanding tasks, we evaluate our method on both language understanding and generation tasks.

\paragraph{Language Understanding Tasks.}
Following many previous works~\cite{vu2021spot,bahri-etal-2022-sharpness,zhong2022e2s2}, we conduct experiments on a combination of tasks from GLUE~\cite{wang2018glue} and SuperGLUE~\cite{wang2019superglue} benchmarks, including linguistic acceptability (CoLA), natural language inference (RTE, CB), paraphrase and similarity (MRPC and STS-B), question answering (BoolQ), word sense disambiguation (WiC) and coreference resolution (WSC).
In practice, we evaluate the performance with Accuracy (``\textit{Acc.}'') metric for most tasks, except the additional F1 score for MRPC, the Pearson-Spearman correlations (``\textit{Pear./Spea.}'') for STS-B and the Matthew correlation (``\textit{Mcc.}'') for CoLA.

\paragraph{Language Generation Tasks.}
We also use two popular generation tasks following~\citet{liu2021understanding,zhang2022bliss} as the benchmarks, \textit{i.e.}, abstractive summarization (XSUM) and grammatical error correction (CoNLL2014). For the XSUM, we report results in terms of standard ROUGE metrics~\cite{Lin:04}, \textit{i.e.}, Rouge-1, Rouge-2 and Rouge-L, respectively. For the CoNLL2014, MaxMatch scores~\cite{Dahlmeier:12} are used for evaluation with Precision, Recall, and $F_{0.5}$ values~\footnote{~Due to the space limitation, we present the details of all used tasks and datasets in Appendix~\ref{appendix_data}}.

\subsection{Implementations}
In practice, we use the pretrained models and code in HuggingFace\footnote{~\url{https://github.com/huggingface/transformers}}~\cite{wolf2019huggingface}. Specifically, for the understanding tasks, we employ 4 widely used PLMs in our study, \textit{i.e.}, BERT~\cite{devlin2019bert}, ELECTRA~\cite{clark2019electra}, ALBERT~\cite{lan2019albert} and RoBERTa~\cite{liu2019roberta}. Furthermore, an representative sequence-to-sequence model, BART~\cite{lewis2020bart}, is used for the generation tasks.

We compare our proposed FSAM method with the base optimizer (without using any SAM approach) and vanilla SAM method.
Specifically, the Adam~\cite{kingma2015adam} is used as the base optimizer to tune our models. 
The $\beta_2$ and weight decay of Adam are set as 0.999 and 0.01. SAM and FSAM use the same settings as above. More specially, we grid search for the neighborhood size of SAM and FSAM on \{1e-2, 5e-3, 1e-3\}. Additionally, for each downstream task, we follow the same hyper-parameter settings from the prior works~\cite{lewis2020bart,xu2021raise}. The detailed hyper-parameters of fine-tuning on these downstream tasks can be seen in Appendix~\ref{appendix_parameters}. We report the averaged results over 5 random seeds for NLU tasks, while for NLG tasks, we follow existing works~\cite{collins2005clause,ding2021progressive} and use the Bootstrap test~\cite{berg2012empirical} to calculate the statistical significance.


\section{Main Results}
\paragraph{FSAM outperforms vanilla SAM by a large margin across different PLMs.}
Table~\ref{tab_main} shows the results of all understanding tasks. We can observe that SAM achieves better average scores than the base Adam in most scenarios, confirming the effectiveness of SAM in improving generalization~\cite{bahri-etal-2022-sharpness}. Moreover, with the help of our Fisher mask strategy, FSAM consistently improves the vanilla SAM by a large margin across all PLMs. Specifically, FSAM yields an improvement of up to 1.98 average score on ELECTRA, 1.01 average score on ALBERT and 1.34 average score on BERT. The average improvement on RoBERTa is slight but also higher than 0.67.

\paragraph{FSAM also works well on the generation tasks.}
Prior works~\cite{kwon2021asam, bahri-etal-2022-sharpness}, which involve the study of SAM or its variants, usually conduct experiments on the image or text classification tasks, \textit{e.g.}, CIFAR-10~\cite{krizhevsky2009learning} and ImageNet~\cite{krizhevsky2012imagenet}. The effectiveness of optimizer on other types of tasks, \textit{e.g}, generation tasks in NLP, has not been explored well. Thus far, we evaluate our FSAM on the generation tasks and present the results in Table~\ref{tab_main2}. 
It can be seen that FSAM can deliver promising results against the vanilla SAM as well. Note that both FSAM and SAM outperform the base Adam optimizer, indicating the applicability of SAM and its variants on generation tasks. 

\paragraph{FSAM improves performance on various model sizes.}
To investigate whether our FSAM is helpful for various scales of PLMs, we evaluate the performance on smaller PLMs, \textit{i.e.}, BERT-base, RoBERTa-base and BART-base. The results are showed in Table~\ref{tab_main3} and Table~\ref{tab_main2}, respectively. We can see that FSAM consistently outperforms the vanilla SAM on multiple smaller PLMs, to be specific, the relative improvements of BERT-base and BART-base are up to 0.92 and 0.64 average scores. These results prove that FSAM works well on various model sizes.  

\section{Analysis and Discussion}
\label{sec_analysis}
In this section, we examine whether our approach works in more complicated scenarios, and provide a more intuitive comparison between different optimizers towards the generalization. More analysis and results can be found in Appendix.

\begin{figure}[t]
	\centering
	\begin{minipage}[t]{0.22\textwidth}
		\centering
		\includegraphics[width=\textwidth]{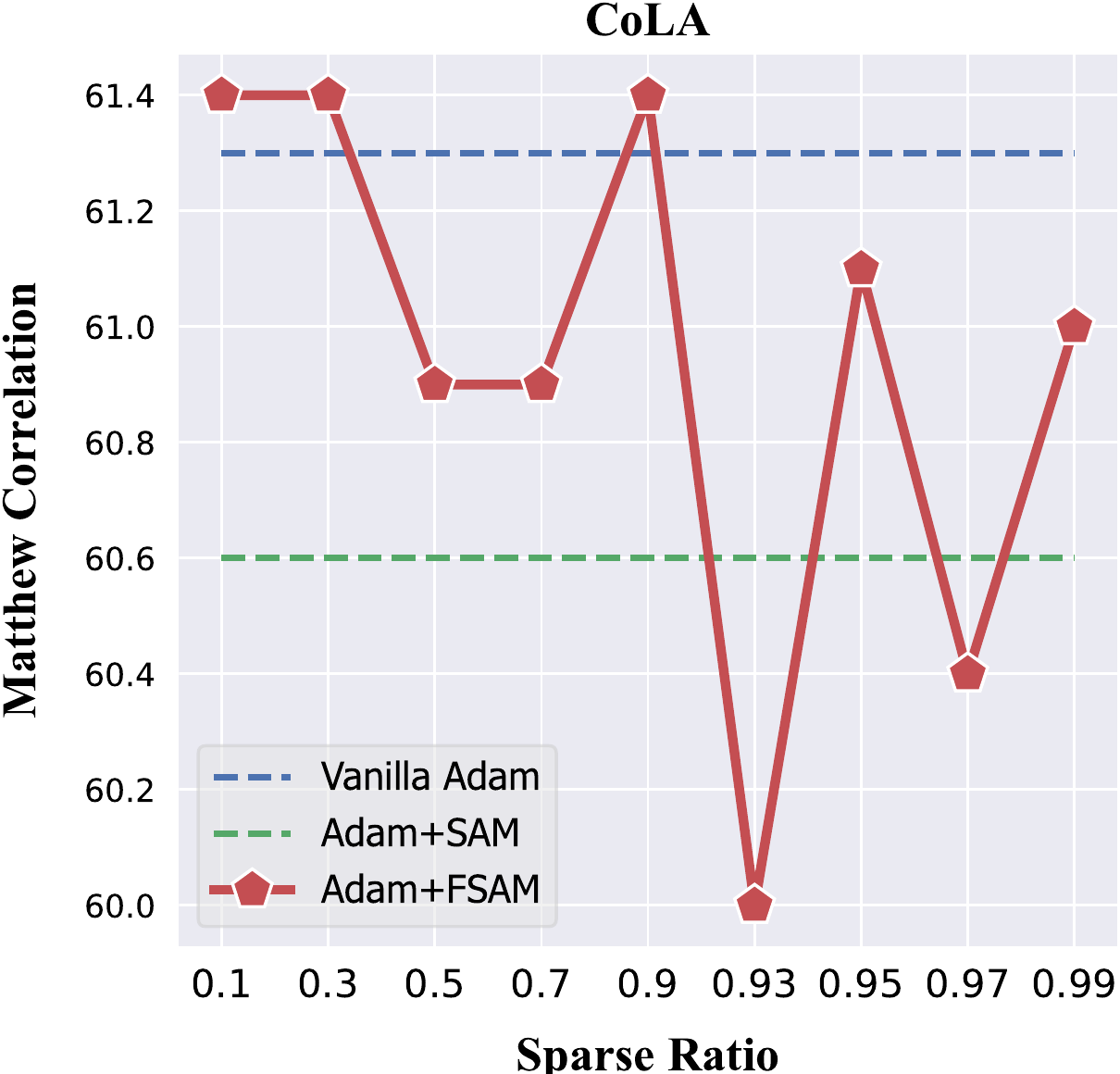}
		\end{minipage}%
		\hfill
		\hfill
		\begin{minipage}[t]{0.22\textwidth}
		\centering
		\includegraphics[width=\textwidth]{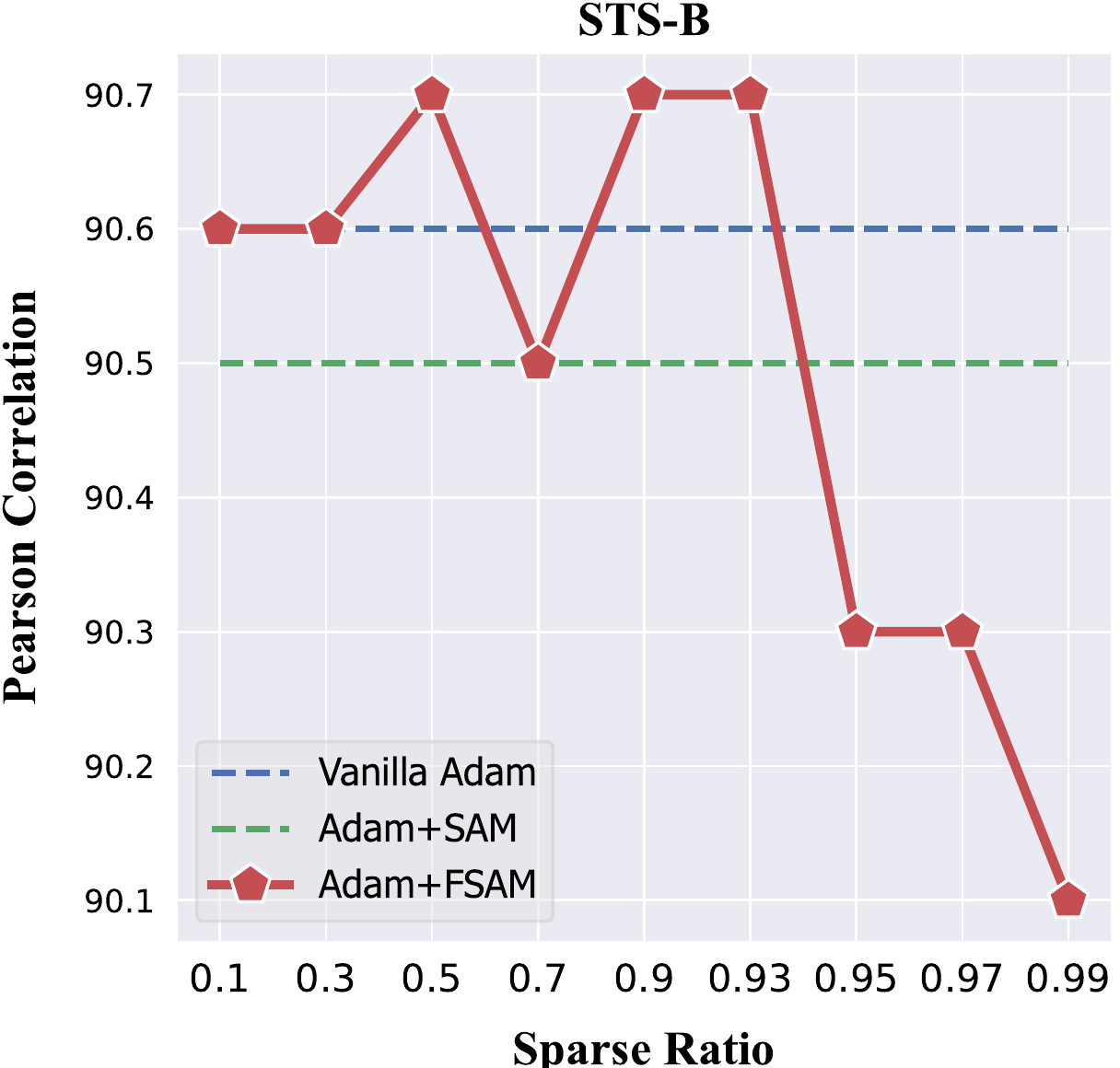} 
		\end{minipage}

	\caption{Results of FSAM at various sparse rates. RoBERTa-base models is used.
	}
	\label{fig:sparse}
\end{figure}

\begin{table}[h]
    \caption{Average performance (CoLA, MRPC, STS-B and RTE) of FSAM with different \gap, which denotes the fixed interval for updating Fisher mask.}
    \label{tab_para2}
    \centering
    \scalebox{0.8}{
    \begin{tabular}{lcccccc}
    \toprule
    \textbf{Method} & 10   & 50   & 100  & 200  & 500  \\ \midrule
    \textbf{FSAM-BERT$_{base}$}                        & 83.9 & 84.0 & \textbf{84.3} & 84.2 & 84.1\\
    \textbf{FSAM-RoBERTa$_{base}$} & 80.0 & 80.1 & \textbf{80.3} & 80.2 & 80.2  \\
    \bottomrule
    \end{tabular}
    }
\end{table}

\subsection{Parameter Analysis}
There are two important hyper-parameters (\textit{i.e.}, $s$ and \gap) in our FSAM, where the $s$ refers to the sparse ratio and \gap is used to control the update frequency of Fisher mask. 
Here, we evaluate the performance of FSAM with different $s$ and \gap on several downstream tasks to analyze their effects.

Firstly, Figure~\ref{fig:sparse} shows the results based on different $s$. We can observe FSAM outperforms the vanilla SAM and base Adam in most settings, indicating the robustness of FSAM. Specifically, when the sparse ratio is 0.9, FSAM consistently achieves the best performance on both tasks. Secondly, for \gap, we show the performance of FSAM on different \gap in Table~\ref{tab_para2}. Too small \gap (\textit{e.g.}, 10) may lead to the Fisher mask updating too fast, thus affecting the stability of model optimization. Recall that we set $s=0.9$ and $\gap=100$ as the default setting.

\begin{table}[t]
\caption{Results of other base optimizers, \textit{i.e.}, AMSGrad and Adagrad. RoBERTa-large is used.}
\label{tab_analysis2}
\centering
\scalebox{0.8}{
\begin{tabular}{l|cccccc}
\toprule
\multicolumn{1}{c|}{\multirow{2}{*}{\textbf{Method}}} &
 CoLA                 &
 \multicolumn{2}{c}{MRPC}                    &
 \multicolumn{2}{c}{STS-B}                   &
 RTE                 
 \\ \cmidrule(r){2-7}
\multicolumn{1}{c|}{}                       &
 \textit{Mcc.}                 &
 \textit{Acc.}                 &
 \textit{F1.}                   &
 \textit{Pear.}                &
 \textit{Spea.}                &
 \textit{Acc.}                 \\
 \hline \hline
 \multicolumn{7}{c}{\cellcolor{lightgray}{ RoBERTa-large}} \\
AMSGrad                                       &
 66.8 &
 90.2 &
 92.7 &
 \textbf{91.7} &
 \textbf{91.5} &
 86.7
\\
-w SAM                                   &
 67.3 &
 90.2 &
 93.0 &
 91.6 &
 91.4 &
 87.0
\\
-w FSAM                       &
 \textbf{68.7} &
 \textbf{90.4} &
 \textbf{93.2}&
 91.2&
91.1&
 \textbf{87.7}
\\ \midrule
Adagrad                                       &
 \textbf{59.3} &
 81.1 &
 87.4 &
 \textbf{88.2} &
 \textbf{88.5} &
 80.9
\\
-w SAM                                   &
 57.5 &
 88.0 &
 91.4 &
 84.6 &
 85.7 &
 83.4
 \\
-w FSAM                       &
 57.5&
 \textbf{90.0}&
 \textbf{92.8}&
 86.8&
 87.1&
 \textbf{86.3}
 \\ \bottomrule 
\end{tabular}
}
\end{table}

\begin{table}[t]
\caption{Results of some SAM variants (\textit{i.e.}, ESAM~\cite{du2021efficient} and GSAM~\cite{zhuang2021surrogate}) with our Fisher-masked strategy, denoted as ``F\_*''.}
\label{tab_analysis1}
\centering
\scalebox{0.8}{
\begin{tabular}{l|cccccc}
\toprule
\multicolumn{1}{c|}{\multirow{2}{*}{\textbf{Method}}} &
 CoLA                 &
 \multicolumn{2}{c}{MRPC}                    &
 \multicolumn{2}{c}{STS-B}                   &
 RTE                 
 \\ \cmidrule(r){2-7}
\multicolumn{1}{c|}{}                       &
 \textit{Mcc.}                 &
 \textit{Acc.}                 &
 \textit{F1.}                   &
 \textit{Pear.}                &
 \textit{Spea.}                &
 \textit{Acc.}                 \\
 \hline \hline
 \multicolumn{7}{c}{\cellcolor{lightgray}{ RoBERTa-large}} \\
Adam                                       &
 66.7 &
 90.4 &
 93.1 &
 92.1 &
 92.0 &
 87.0
\\
-w ESAM                                   &
 \textbf{68.5} &
 90.7 &
 93.3 &
 91.3 &
 90.9 &
 86.6
\\
-w F\_ESAM                       &
 \textbf{68.5} &
 \textbf{90.9} &
 \textbf{93.5}&
 \textbf{91.6}&
 \textbf{91.0}&
 \textbf{90.0}
\\ \midrule
-w GSAM                                   &
 67.0 &
 89.7 &
 92.6 &
 91.9 &
 91.7 &
 86.3
 \\
-w F\_GSAM                       &
 \textbf{70.0}&
 \textbf{90.7}&
 \textbf{93.2}&
 \textbf{92.3}&
 \textbf{92.0}&
 \textbf{86.6}
 \\ \bottomrule 
\end{tabular}
}
\end{table}

\subsection{Complementarity with Other Optimizers}
As aforementioned, we show the effectiveness of our Fisher mask strategy on SAM optimization. To further prove the universality of our proposed strategy, we examine whether the strategy is complementary with i) \textbf{\textit{more base optimizers}} and ii) \textbf{\textit{other efficient SAM variants.}}

To verify i),  we use the additional AMSGrad and Adagrad as the base optimizers and evaluate the performance with different strategies, respectively. Table~\ref{tab_analysis2} lists the results of RoBERTa-large. It can be seen that FSAM consistently achieves the best performance upon these base optimizers, showing \textit{our strategy is not sensitive to the base optimizers.} 

For ii), we apply our strategy to another two cutting-edge SAM-variant optimizers, \textit{i.e.}, ESAM~\cite{du2021efficient} and GSAM~\cite{zhuang2021surrogate}. Table~\ref{tab_analysis1} shows the results, where F\_ESAM and F\_GSAM refer to the optimizations using our strategy. When evaluating RoBERTa-large on these tasks, compared to the vanilla ESAM and GSAM, our method can bring a 0.70 average score improvement. \textit{This indicates that our Fisher mask strategy is not only beneficial to the vanilla SAM, but also can be applied to other efficient SAM variants.}

\begin{figure*}[ht]
	\centering
	\includegraphics[width=1\textwidth]{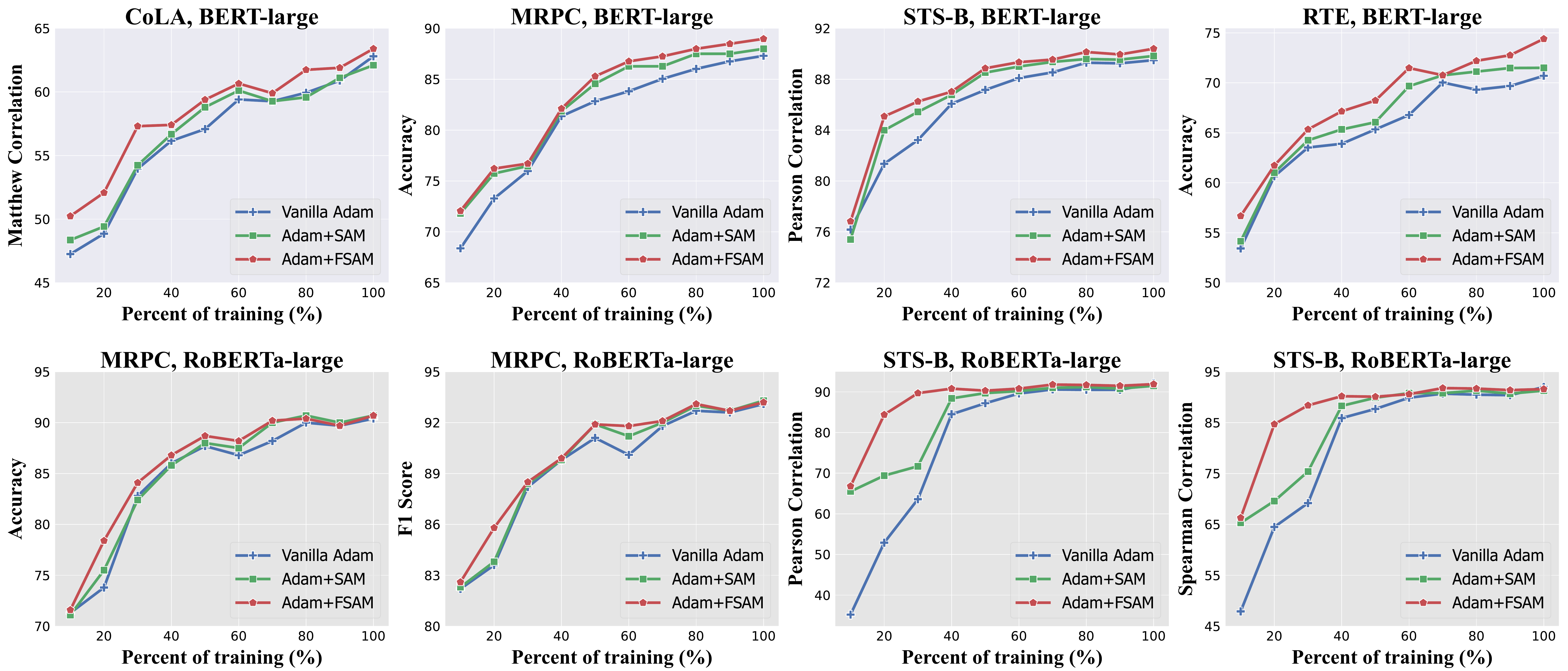} 
	\caption{Results at various training data sampling rates. BERT-large and RoBERTa-large models are used. We can see that our proposed module improves SAM by a large margin across all data size regimes.
	}
	\label{fig:training_size}
\end{figure*}

\subsection{Results in Low-resource Scenarios}
Prior works~\cite{chen2021vision,bahri-etal-2022-sharpness} show that SAM helps more when there is less training data. Here, we verify how our Fisher mask strategy affects the effectiveness of SAM in low-resource scenarios. In practice, we follow~\citet{bahri-etal-2022-sharpness} and sub-sample the training splits for several GLUE datasets at rates ranging from 10\% to 90\%. Notably, due to the space limitation, we only report parts of results on BERT-large and RoBERTa-large in Figure~\ref{fig:training_size}.

We can observe consistent gains from both SAM and our FSAM across all sizes of sub-sampled training sets, which confirms the statement in prior work~\cite{bahri-etal-2022-sharpness}. Moreover, it can also be seen that our FSAM improves the vanilla SAM by a large margin in low-resource scenarios, especially when there is only 20\% training data. More specifically, when fine-tuning the RoBERTa-large on the STS-B dataset, the relative improvements of FSAM are up to 15.0 and 15.1 in terms of accuracy and F1 score, respectively. \textit{These results show that our method is more helpful in low-resource scenarios.}

\begin{figure}[t]
	\centering
	\includegraphics[width=0.45\textwidth]{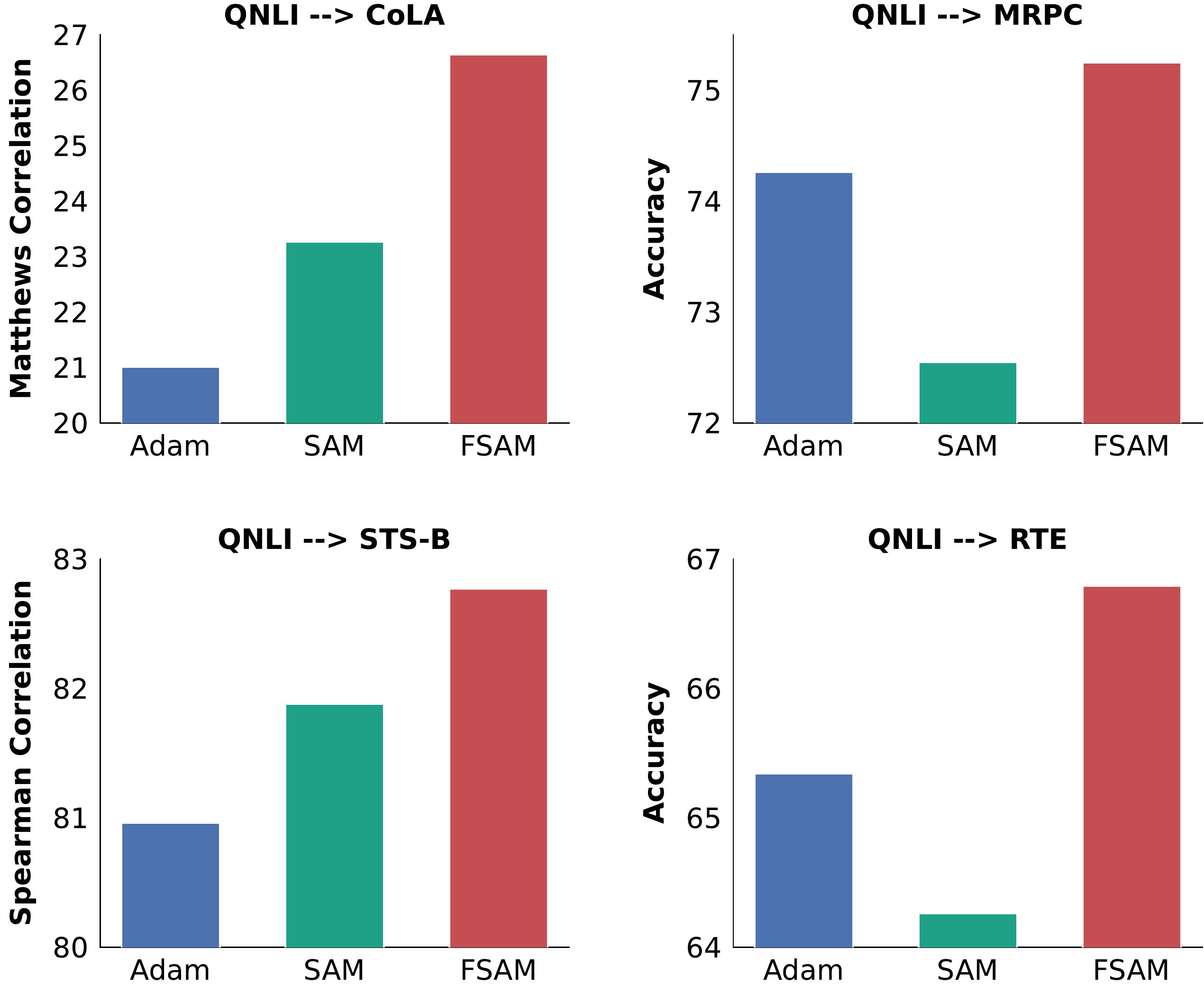} 
	\caption{Analysis of task generalization. The model is fine-tuned on QNLI task and transferred to four different tasks. We can see that our FSAM consistently brings better generalization compared with vanilla SAM.
	}
	\label{fig:task_transfer}
\end{figure}
\begin{figure*}[h]
	\centering
	\includegraphics[width=0.91\textwidth]{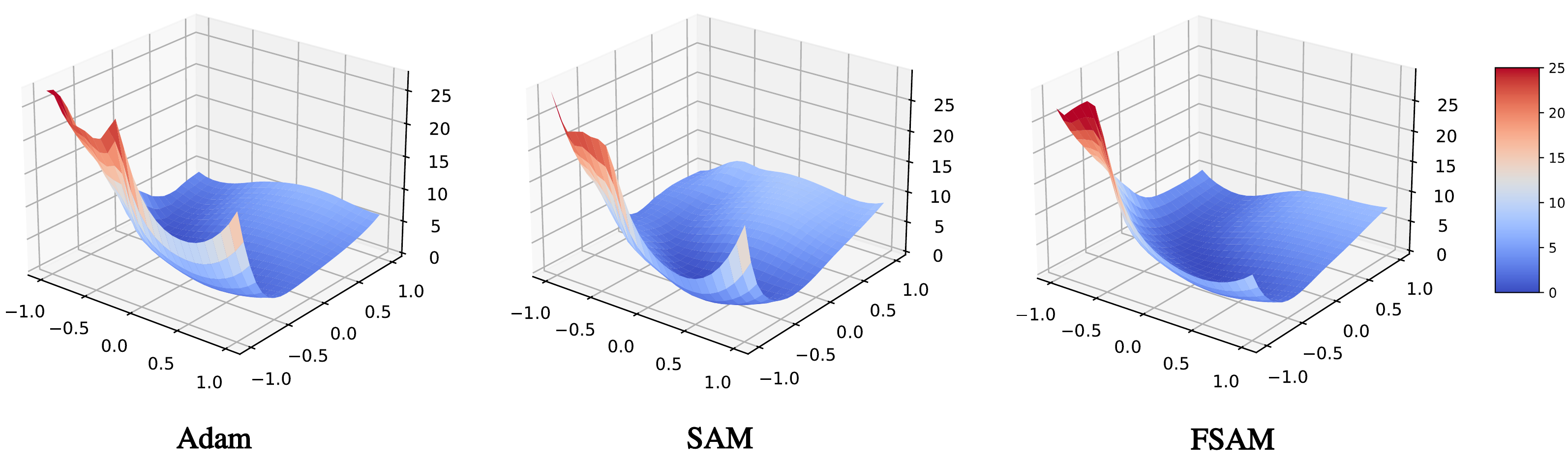} 
	\caption{The loss surface of RoBERTa-base fine-tuned on CoLA with different optimizers. It can be seen that FSAM smooths the loss surface effectively, \textit{i.e.}, improving the model generalization.
	}
	\label{fig:3d_loss}
\end{figure*}
\begin{figure}[h]
	\centering
	\includegraphics[width=0.48\textwidth]{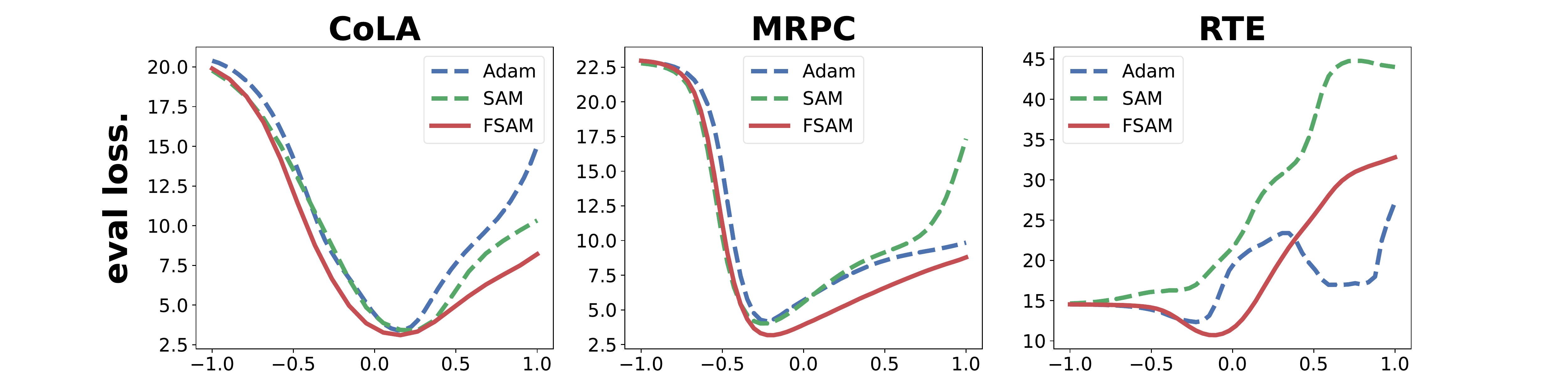} 
	\caption{1D visualization of loss landscapes of RoBERTa-base model fine-tuned on different tasks. 
	}
	\label{fig:1d_loss}
\end{figure}
\subsection{Does FSAM Bring Better Generalization?}
We prove the effectiveness of our FSAM by large-scale experiments as above. Here, to examine whether FSAM indeed brings better generalization, we i) \textbf{\textit{measure the generalization properties}} (\textit{i.e.}, task generalization) of different optimizations, and ii) \textbf{\textit{visualize the generalization of models}} via the training loss landscapes.

\paragraph{Task Generalization.} 
The common wisdom is that models with better generalization would perform better on out-of-domain data~\cite{xu2021raise}. Thus, to measure the generalization ability of the model quantitatively, we follow the experiments from~\citet{xu2021raise} and evaluate the performance of various fine-tuned models on out-of-domain data. In practice, we first fine-tune RoBERTa-large on the QNLI task (one of GLUE tasks) and then transfer it to other tasks, \textit{i.e.}, CoLA, MRPC, STS-B and RTE. The results of different optimization strategies are illustrated in Figure~\ref{fig:task_transfer}. 

We can observe that FSAM consistently outperforms the base Adam and vanilla SAM on different transferred tasks. To be more specific, compared with vanilla SAM, our FSAM brings a 2.37 relative average improvement score on these tasks, indicating that \textit{our method helps more in improving the generalization of model.}

\paragraph{Visualization of Landscape.}
Here, we visualize the loss landscapes of RoBERTa-base model fine-tuned on CoLA with different optimizers. In practice, we follow~\citet{li2018visualizing,zan2022complementarity} and show the 3D loss surface results in Figure~\ref{fig:3d_loss} by sampling 25$\times$25 points in the range of [-1, 1] from random ``filter normalized'' directions~\cite{li2018visualizing}. Additionally, following~\citet{hao2019visualizing,he2021effectiveness}, we also plot the 1D loss curve in Figure~\ref{fig:1d_loss} by linear interpolation between the pretrained model weights before (denoted as $\theta_0$) and after (denoted as $\theta_1$) fine-tuning, \textit{i.e.}, ``$\theta_1+\alpha \cdot (\theta_1-\theta_0)$'', where $\alpha$ is a scalar parameter that is ranged from -1 to 1.
We can find that the landscape of FSAM is much flatter than both base Adam and SAM, especially in the area of low loss. \textit{These results prove that FSAM can smooth the loss landscape and improve the generalization of PLMs effectively.}
\section{Conclusion}

In this paper, we improve the recently-proposed SAM optimization method with a novel Fisher mask strategy, and propose a new approach FSAM. 
Different from the vanilla SAM that adds a constant perturbation to all parameters, FSAM uses the Fisher information to calculate the Fisher mask and further obtains the sparse perturbation. Such a method can not only reduce the computation cost of optimization potentially, but also make the optimizer focus on the optimization of the important sharper parameters.
Extensive experiments on five PLMs and various language understanding and generation tasks show that our FSAM consistently improves the performance of SAM by a large margin across all PLMs and tasks. Additionally, in-depth analysis and discussion demonstrate the robustness and universality of FSAM on improving the generalization of language models.

\section*{Limitations}
Indeed, our work has some potential limitations, and we will discuss them in this section. Firstly, we only evaluate the BART on two generation tasks with different optimizers, and prove the effectiveness of our FSAM optimization method. It would be more valuable to consider other sequence-to-sequence PLMs and more generation tasks, \textit{e.g.}, fine-tuning T5~\cite{raffel2020exploring} on CNN-DM~\cite{hermann2015teaching}. 

Additionally, as aforementioned in Section~\ref{sec_intro}, we do not achieve the actual sparse training in this work, due to the limitation of the hardware. Specifically, to actually accelerate the unstructured sparsity (fine-grained sparsity), we need to implement the relevant sparse matrix calculation using the CUDA API on the recent NVIDIA Ampere A100 GPUs~\cite{choquette2021nvidia} equipped with Sparse Tensor Cores~\cite{pool2020accelerating} (Notably, although there is python API (\textit{i.e.}, ASP) provided by NVIDIA for accelerating the unstructured sparsity, it is only applicable to accelerate the model parameter sparsity, but not to the gradient-level sparse acceleration in our FSAM scenario). Unfortunately, it is relatively impracticable for us to do that. However, we still believe that FSAM has great potential to achieve true training acceleration in the future, with the development of hardware for fine-grained sparse operation.
\section*{Acknowledgements}
We are grateful to the anonymous reviewers and the area chair for their insightful comments and suggestions.
This work was supported in part by the National Natural Science Foundation of China under Grants 62141112, 62076186 and 62225113, and in part by the Science and Technology Major Project of Hubei Province (Next-Generation AI Technologies) under Grant 2019AEA170. The numerical calculations in this paper have been done on the supercomputing system in the Supercomputing Center of Wuhan University.
\bibliography{emnlp2022}
\bibliographystyle{acl_natbib}

\clearpage
\appendix
\section{Appendix}
\label{sec:appendix}

\subsection{Missing Proof}
\label{appendix_proof}
\begin{lemma}
\label{baselemma}
For any two vectors $x,y\in \mathbb{R}^d$, and any scalar $\alpha > 1$, we have the following inequality
\begin{align*}
    \langle x,y \rangle \leq \frac{\alpha^2}{2}||x||^2 + \frac{1}{2\alpha^2}||y||^2
\end{align*}
\end{lemma}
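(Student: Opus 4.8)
The plan is to derive the inequality by expanding a manifestly non-negative squared norm and then rearranging. The central observation is that for any nonzero scalar $\alpha$ the combination $\alpha x - \frac{1}{\alpha} y$ is itself a vector in $\mathbb{R}^d$, so its squared Euclidean norm cannot be negative. I would therefore begin from the trivial fact
\[
0 \leq \Big|\Big| \alpha x - \tfrac{1}{\alpha} y \Big|\Big|^2 .
\]

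Next I would expand the right-hand side using the bilinearity and symmetry of the inner product, namely $||u-v||^2 = ||u||^2 - 2\langle u,v\rangle + ||v||^2$ with $u = \alpha x$ and $v = \frac{1}{\alpha} y$. This gives
\[
\Big|\Big| \alpha x - \tfrac{1}{\alpha} y \Big|\Big|^2 = \alpha^2 ||x||^2 - 2\langle x,y\rangle + \frac{1}{\alpha^2} ||y||^2 ,
\]
where the cross term collapses to $-2\langle x,y\rangle$ because $\langle \alpha x, \tfrac{1}{\alpha} y\rangle = \langle x,y\rangle$ (the factors $\alpha$ and $\alpha^{-1}$ cancel).

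Combining the two displays yields $0 \leq \alpha^2 ||x||^2 - 2\langle x,y\rangle + \alpha^{-2}||y||^2$. Isolating the inner product gives $2\langle x,y\rangle \leq \alpha^2 ||x||^2 + \alpha^{-2}||y||^2$, and dividing through by $2$ produces exactly the claimed bound $\langle x,y\rangle \leq \frac{\alpha^2}{2}||x||^2 + \frac{1}{2\alpha^2}||y||^2$. I would also note that the hypothesis $\alpha > 1$ is stronger than necessary: the argument uses only $\alpha^2 > 0$, so any nonzero $\alpha$ works, and the stated range is a special case.

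There is essentially no obstacle to overcome here, since the statement is a weighted form of Young's inequality (equivalently a rescaled Cauchy--Schwarz estimate). The only point requiring a little care is the bookkeeping of the cross term when the square is expanded; everything else is immediate from the non-negativity of a norm.
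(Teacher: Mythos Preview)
Your proof is correct. The paper takes a slightly different route: it expands both sides coordinate-wise and applies the AM--GM inequality $a+b\geq 2\sqrt{ab}$ to each pair $\frac{\alpha^2}{2}(x)_j^2$ and $\frac{1}{2\alpha^2}(y)_j^2$, obtaining $\sum_j |(x)_j(y)_j|$, which dominates $\langle x,y\rangle$. Your approach via the nonnegativity of $\|\alpha x-\alpha^{-1}y\|^2$ is arguably cleaner, since it works directly with the norm and inner product without a coordinate decomposition and makes the cancellation of the $\alpha$ factors in the cross term transparent; it also sidesteps the implicit extra step $\sum_j|(x)_j(y)_j|\geq\sum_j(x)_j(y)_j$ that the paper's chain needs. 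Both arguments are equally elementary and yield the same bound, and your remark that only $\alpha\neq 0$ is needed is accurate.
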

\begin{proof}\let\qed\relax
\begin{align*}
    &RHS=\frac{\alpha^2}{2}\sum_{j=1}^d (x)_j^2 + \frac{1}{2\alpha^2} \sum_{j=1}^d (y)_j^2 \\& \geq \sum_{j=1}^d2\sqrt{\frac{\alpha^2}{2}\cdot (x)_j^2 \cdot \frac{1}{2\alpha^2}(y)_j^2}=LHS
\end{align*}
\end{proof}

\begin{lemma}
\label{lemma1}
We have the following inequality:
\begin{align*}
    &\langle \nabla f(x_t), \frac{\gamma_t}{b}\sum_{u\in B} \nabla f_i(x_t + \rho \frac{\sum \nabla f_i(x_t)\odot m_t}{||\sum \nabla f_i(x_t)||})\odot \\& (\frac{1}{\sqrt{\hat{v_{t-1}}}}-\frac{1}{\sqrt{\hat{v_{t}}}}) \rangle \leq \gamma_t G^2 ||\frac{1}{\sqrt{\hat{v_{t-1}}}} - \frac{1}{\sqrt{\hat{v_{t}}}}||_1
\end{align*}
\end{lemma}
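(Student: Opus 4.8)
The plan is to prove this as a coordinatewise Hölder-type estimate, since every factor appearing in each coordinate of the inner product is controlled by the bounded-gradient assumption. Introduce the shorthand $a := \nabla f(x_t)$ for the full gradient, let $g := \tfrac{1}{b}\sum_{u\in B} \nabla f_i\bigl(x_t + \rho \tfrac{\sum \nabla f_i(x_t)\odot m_t}{\|\sum \nabla f_i(x_t)\|}\bigr)$ denote the averaged SAM gradient, and write $c := \tfrac{1}{\sqrt{\hat{v}_{t-1}}} - \tfrac{1}{\sqrt{\hat{v}_{t}}}$ for the entrywise difference of the two adaptive preconditioners. The left-hand side is then $\gamma_t \langle a,\, g\odot c\rangle = \gamma_t \sum_{j=1}^d a_j\, g_j\, c_j$, and the goal is to show this is at most $\gamma_t G^2 \|c\|_1$.

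First I would apply the triangle inequality to pull absolute values inside the sum, $\sum_{j} a_j g_j c_j \le \sum_{j} |a_j|\,|g_j|\,|c_j|$. The two essential estimates are then $|a_j|\le G$ and $|g_j|\le G$ for every coordinate $j$. For the first, Assumption~\ref{assume:bounded-gradient} gives $\|\nabla f_i(w)\|_\infty \le G$ for all $w$, and since $a=\nabla f(x_t)$ is an average (expectation) of such stochastic gradients, the triangle inequality transfers the bound, so $\|a\|_\infty \le G$. For the second, the same assumption applies at the perturbed weight $x_t + \rho(\cdots)$ whatever that point happens to be, so each summand defining $g$ has $\ell_\infty$ norm at most $G$, and averaging over the batch preserves this, giving $\|g\|_\infty \le G$. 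Substituting both bounds yields $\sum_j |a_j||g_j||c_j| \le G^2 \sum_j |c_j| = G^2 \|c\|_1$, and multiplying through by $\gamma_t \ge 0$ closes the argument.

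There is no deep obstacle here; the lemma is a routine bounding step, so the only care required is in the two $\ell_\infty$ controls and in reading off $\|c\|_1$ correctly. It is worth noting that line~18 of Algorithm~\ref{alg:fsam} sets $\hat{v}_t = \max(\hat{v}_{t-1}, v_t)$, so $\hat{v}_t \ge \hat{v}_{t-1}$ entrywise, whence $c_j \ge 0$ for all $j$ and $\|c\|_1 = \sum_j c_j = \sum_j |c_j|$; this confirms the $\ell_1$ norm is the honest sum of the nonnegative entries and makes the final inequality exact rather than relying on sign cancellation. Finally, the Fisher mask $m_t$ sitting inside the perturbation direction is irrelevant to the estimate, because the bound on $g$ only invokes Assumption~\ref{assume:bounded-gradient} at the perturbed iterate and never inspects how that iterate was formed.
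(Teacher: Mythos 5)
Your proof is correct and follows essentially the same route as the paper's: both arguments pull absolute values inside the inner product coordinatewise and apply Assumption~\ref{assume:bounded-gradient} twice, once to $\nabla f(x_t)$ and once to each stochastic gradient at the perturbed iterate (the paper distributes the batch sum before bounding, you bound the batch average's $\ell_\infty$ norm directly, which is the same estimate), arriving at $\gamma_t G^2\|\tfrac{1}{\sqrt{\hat{v}_{t-1}}}-\tfrac{1}{\sqrt{\hat{v}_t}}\|_1$. Your added observation that the $\max$ update in Algorithm~\ref{alg:fsam} makes the entries of the preconditioner difference nonnegative is a correct sanity check the paper leaves implicit, but it is not needed since absolute values are taken anyway.
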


\begin{proof}\let\qed\relax
\begin{align*}
    &\langle \nabla f(x_t), \frac{\gamma_t}{b}\sum_{i\in B} \nabla f_i(x_t + \rho \frac{\sum \nabla f_i(x_t)\odot m_t}{||\sum \nabla f_i(x_t)||}) \\& \odot (\frac{1}{\sqrt{\hat{v_{t-1}}}}-\frac{1}{\sqrt{\hat{v_{t}}}}) \rangle 
    \\
    &\leq \frac{\gamma_t}{b}  \sum_{j=1}^d|(\nabla f(x_t))_{(j)}| \\& \times |\sum_{i\in B}\nabla f_i(x_t+\rho \frac{\sum \nabla f_i(x_t)\odot m_t}{||\sum \nabla f_i(x_t)||}) \\& \odot (\frac{1}{\sqrt{\hat{v_{t-1}}}}-\frac{1}{\sqrt{\hat{v_{t}}}})_{(j)}|
    \\
    &\leq \frac{\gamma_tG}{b} \sum_{j=1}^d \sum_{i\in B} |\nabla f_i(x_t+\rho \frac{\sum \nabla f_i(x_t)\odot m_t}{||\sum \nabla f_i(x_t)||}) \\& \odot (\frac{1}{\sqrt{\hat{v_{t-1}}}}-\frac{1}{\sqrt{\hat{v_{t}}}})_{(j)}|
    \\
    &\leq \frac{\gamma_t G^2}{b} \sum_{j=1}^d \sum_{i\in B} |(\frac{1}{\sqrt{\hat{v_{t-1}}}}-\frac{1}{\sqrt{\hat{v_{t}}}})_{(j)}|
    \\
    &\leq \gamma_t G^2 ||(\frac{1}{\sqrt{\hat{v_{t-1}}}}-\frac{1}{\sqrt{\hat{v_{t}}}})_{(j)}||_1
\end{align*}
\end{proof}

\begin{lemma}
\label{lemma2}
With the term defined before, we have the following inequality
\begin{align*}
&\langle \nabla f(x_t), \frac{\gamma_t}{b}\sum_{i\in B}\nabla f_i(x_t + \rho \frac{\nabla f(x_t) \odot m_t}{||\nabla f(x_t)||}) \\& \odot \frac{1}{\sqrt{\hat{v}_{t-1}}}  - \frac{\gamma_t}{b}\sum_{i\in B}\nabla f_i(x_t \\&+ \rho \frac{\sum \nabla f_i(x_t) \odot m_t}{||\sum \nabla f_i(x_t)||}) \odot \frac{1}{\sqrt{\hat{v}_{t-1}}} \rangle  \\
&\leq \frac{\gamma_t}{2\mu^2}||\nabla f(x_t)\odot \sqrt{\frac{1}{\sqrt{\hat{v}_{t-1}}}}||^2 +  \frac{2\mu^2\gamma_t L^2\rho^2}{\epsilon} ,
\end{align*}
where the $\mu>1$ is a undetermined scalar.
\end{lemma}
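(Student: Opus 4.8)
The plan is to reduce the whole estimate to Lemma~\ref{baselemma} (the weighted Young inequality) applied with $\alpha=\mu$, after first recognizing that the two SAM gradients appearing on the left are gradients of the \emph{same} functions $f_i$ evaluated at two nearby points. Set $p_1 = x_t + \rho\,\frac{\nabla f(x_t)\odot m_t}{||\nabla f(x_t)||}$ and $p_2 = x_t + \rho\,\frac{\sum_{i\in B}\nabla f_i(x_t)\odot m_t}{||\sum_{i\in B}\nabla f_i(x_t)||}$, so the bracketed left-hand quantity is exactly $\frac{\gamma_t}{b}\sum_{i\in B}\bigl(\nabla f_i(p_1)-\nabla f_i(p_2)\bigr)\odot\frac{1}{\sqrt{\hat v_{t-1}}}$. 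The inner product then decomposes into a batch sum of terms $\langle \nabla f(x_t),\,(\nabla f_i(p_1)-\nabla f_i(p_2))\odot\frac{1}{\sqrt{\hat v_{t-1}}}\rangle$.

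First I would symmetrize the adaptive preconditioner. Since $\frac{1}{\sqrt{\hat v_{t-1}}}$ is a nonnegative diagonal scaling, I factor it as $\sqrt{1/\sqrt{\hat v_{t-1}}}\cdot\sqrt{1/\sqrt{\hat v_{t-1}}}$ and move one quarter-power copy onto each side of the pairing, rewriting each summand as $\langle \nabla f(x_t)\odot\sqrt{1/\sqrt{\hat v_{t-1}}},\,(\nabla f_i(p_1)-\nabla f_i(p_2))\odot\sqrt{1/\sqrt{\hat v_{t-1}}}\rangle$. Applying Lemma~\ref{baselemma} with $y=\nabla f(x_t)\odot\sqrt{1/\sqrt{\hat v_{t-1}}}$ yields the $\frac{1}{2\mu^2}||\nabla f(x_t)\odot\sqrt{1/\sqrt{\hat v_{t-1}}}||^2$ contribution; since this summand is independent of $i$, the average $\frac{1}{b}\sum_{i\in B}$ collapses it and, after reinstating $\gamma_t$, reproduces the first term on the right exactly.

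It remains to control the companion term $\frac{\mu^2}{2}||x||^2$. I would use the uniform bound $1/\sqrt{\hat v_{t-1}}\le 1/\epsilon$ (with $\epsilon=\delta$, guaranteed by the initialization $\hat v_{-1}=\delta^2$ and the running maximum in Algorithm~\ref{alg:fsam}) to strip the preconditioner at a cost of $1/\epsilon$, then invoke $L$-smoothness (Assumption~\ref{assume:l-smooth}) to get $||\nabla f_i(p_1)-\nabla f_i(p_2)||\le L\,||p_1-p_2||$. The crucial geometric fact is that each masked, normalized perturbation direction has norm at most one — masking only shrinks a vector and the normalization caps the unmasked vector at unit norm — so by the triangle inequality $||p_1-p_2||\le 2\rho$, independent of how far the mini-batch direction deviates from the full-gradient direction. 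Squaring gives $4L^2\rho^2$, and the bookkeeping $\frac{\mu^2}{2}\cdot\frac{1}{\epsilon}\cdot 4L^2\rho^2=\frac{2\mu^2 L^2\rho^2}{\epsilon}$ recovers the second term after the factor $\gamma_t$ is restored.

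The main obstacle is this last geometric step. The tempting route is to bound $||p_1-p_2||$ via the stochastic-gradient variance $\sigma^2$ (Assumption~\ref{assume:bounded-variance}), but that would introduce $\sigma^2$ into a bound that is supposed to be purely $O(L^2\rho^2)$; the resolution is to exploit that SAM's normalization makes \emph{both} perturbations uniformly bounded, so a crude constant suffices and the variance never enters here. Getting the leading constant exactly right hinges on the interplay between the factor $2$ in $||p_1-p_2||\le 2\rho$ (squared to $4$) and the factor $\tfrac12$ supplied by Lemma~\ref{baselemma}; the preconditioner splitting is then routine once its quarter-power form is recognized.
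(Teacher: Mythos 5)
Your proposal is correct and takes essentially the same route as the paper's proof: rewrite the left side as $\frac{\gamma_t}{b}\sum_{i\in B}\bigl(\nabla f_i(p_1)-\nabla f_i(p_2)\bigr)\odot\frac{1}{\sqrt{\hat v_{t-1}}}$ paired with $\nabla f(x_t)$, split the preconditioner into two quarter-power factors $\sqrt{1/\sqrt{\hat v_{t-1}}}$, apply Lemma~\ref{baselemma} with $\alpha=\mu$, bound the preconditioner uniformly by $1/\delta$, and finish with $L$-smoothness together with the observation that masking shrinks norms and the two normalized directions are unit vectors, so $\|p_1-p_2\|\le 2\rho$ gives the $4L^2\rho^2$ and hence the $\frac{2\mu^2\gamma_t L^2\rho^2}{\delta}$ term (your reading $\epsilon=\delta$ matches the paper's proof, which ends with $\delta$ despite the $\epsilon$ in the statement). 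The only cosmetic difference is that you apply Young's inequality per summand and then average, whereas the paper applies it to the batch-averaged vector and then uses $\bigl\|\frac{1}{b}\sum_{i\in B} a_i\bigr\|^2\le\frac{1}{b}\sum_{i\in B}\|a_i\|^2$; both orderings yield identical constants.
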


\begin{proof}\let\qed\relax
\begin{align*}
    &LHS =\langle \nabla f(x_t)\odot \sqrt{\frac{1}{\sqrt{\hat{v}_{t-1}}}},\\& \frac{\gamma_t}{b}\sum_{i\in B}(\nabla f_i(x_t + \rho \frac{\nabla f(x_t) \odot m_t}{||\nabla f(x_t)||}) \\&- \nabla f_i(x_t + \rho \frac{\sum \nabla f_i(x_t) \odot m_t}{||\sum \nabla f_i(x_t)||})) \odot \sqrt{\frac{1}{\sqrt{\hat{v}_{t-1}}}} \rangle 
\end{align*}
By using Lemma~\ref{baselemma}, we have
\begin{align*}
    &LHS \leq \frac{\mu^2 \gamma_t}{2b^2} ||\sum_{i\in B}(\nabla f_i(x_t + \rho \frac{\nabla f(x_t) \odot m_t}{||\nabla f(x_t)||}) \\&- \nabla f_i(x_t + \rho \frac{\sum \nabla f_i(x_t) \odot m_t}{||\sum \nabla f_i(x_t)||})) \odot \sqrt{\frac{1}{\sqrt{\hat{v}_{t-1}}}}||^2 \\& + \frac{\gamma_t}{2\mu^2}||\nabla f(x_t)\odot \sqrt{\frac{1}{\sqrt{\hat{v}_{t-1}}}}||^2
    \\
    &\leq \frac{\mu^2 \gamma_t}{2b} \sum ||(\nabla f_i(x_t + \rho \frac{\nabla f(x_t) \odot m_t}{||\nabla f(x_t)||}) \\&- \nabla f_i(x_t + \rho \frac{\sum \nabla f_i(x_t) \odot m_t}{||\sum \nabla f_i(x_t)||})) \odot\sqrt{\frac{1}{\sqrt{\hat{v}_{t-1}}}}||^2 \\& + \frac{\gamma_t}{2\mu^2}||\nabla f(x_t)\odot \sqrt{\frac{1}{\sqrt{\hat{v}_{t-1}}}}||^2
    \\
    &\leq \frac{\mu^2 \gamma_t}{2b} \sum ||\nabla f_i(x_t + \rho \frac{\nabla f(x_t) \odot m_t}{||\nabla f(x_t)||}) \\&- \nabla f_i(x_t + \rho \frac{\sum \nabla f_i(x_t) \odot m_t}{||\sum \nabla f_i(x_t)||}) ||^2 \\& \cdot ||\sqrt{\frac{1}{\sqrt{\hat{v}_{t-1}}}}||^2_\infty  + \frac{\gamma_t}{2\mu^2}||\nabla f(x_t)\odot \sqrt{\frac{1}{\sqrt{\hat{v}_{t-1}}}}||^2
    \\
    &\leq \frac{\gamma_t }{2\mu^2}||\nabla f(x_t)\odot \sqrt{\frac{1}{\sqrt{\hat{v}_{t-1}}}}||^2 +  \frac{\mu^2\gamma_tL^2\rho^2}{2b\delta} \cdot \\& \sum ||\frac{\nabla f(x_t) \odot m_t}{||\nabla f(x_t)||} - \frac{\sum \nabla f_i(x_t) \odot m_t}{||\sum \nabla f_i(x_t)||}||^2 
    \\
    &\leq \frac{\gamma_t}{2\mu^2}||\nabla f(x_t)\odot \sqrt{\frac{1}{\sqrt{\hat{v}_{t-1}}}}||^2 +  \frac{\mu^2\gamma_tL^2\rho^2}{2b\delta} \cdot \\& \sum ||(\frac{\nabla f(x_t)}{||\nabla f(x_t)||} - \frac{\sum \nabla f_i(x_t) }{||\sum \nabla f_i(x_t)||} )\odot m_t||^2 
    \\
    &\leq \frac{\gamma_t}{2\mu^2}||\nabla f(x_t)\odot \sqrt{\frac{1}{\sqrt{\hat{v}_{t-1}}}}||^2 +  \frac{\mu^2\gamma_tL^2\rho^2}{2b\delta} \cdot \\& \sum ||\frac{\nabla f(x_t)}{||\nabla f(x_t)||} - \frac{\sum \nabla f_i(x_t) }{||\sum \nabla f_i(x_t)||}||^2  \cdot ||m_t||^2_\infty
    \\
    &\leq \frac{\gamma_t}{2\mu^2}||\nabla f(x_t)\odot \sqrt{\frac{1}{\sqrt{\hat{v}_{t-1}}}}||^2 +  \frac{\mu^2\gamma_tL^2\rho^2}{2b\delta} \cdot \\& \sum ||\frac{\nabla f(x_t)}{||\nabla f(x_t)||} - \frac{\sum \nabla f_i(x_t) }{||\sum \nabla f_i(x_t)||}||^2 
    \\
    &\leq \frac{\gamma_t}{2\mu^2}||\nabla f(x_t)\odot \sqrt{\frac{1}{\sqrt{\hat{v}_{t-1}}}}||^2 +  \frac{2\mu^2\gamma_tL^2\rho^2}{\delta} 
\end{align*}
The $\mu>0$ is the term to be determined.
\end{proof}

\begin{lemma}
\label{lemma3}
We have the following inequality:
\begin{align*}
    &\mathbb{E}\langle \nabla f(x_t), -\frac{\gamma_t}{b}\sum_{i\in B}\nabla f_i(x_t + \rho \frac{\nabla f(x_t) \odot m_t}{||\nabla f(x_t)||})\\& \odot \frac{1}{\sqrt{\hat{v}_{t-1}}} \rangle \leq -\gamma_t ||\nabla f(x_t)\odot \sqrt{\frac{1}{\sqrt{\hat{v}_{t-1}}}}||^2 \\
    &+ \mathbb{E}\frac{\gamma_t}{2\beta^2}||\nabla f(x_t)\odot \sqrt{\frac{1}{\sqrt{\hat{v}_{t-1}}}}||^2 
    + \frac{\gamma_t\beta^2L^2\rho^2}{2\delta},
\end{align*}
where the $\beta>1$ is a undetermined scalar.
\end{lemma}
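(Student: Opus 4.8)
The plan is to prove this bound on the idealized descent term by first using that everything multiplying the stochastic gradient is measurable with respect to the past, then peeling off the exact descent contribution and controlling the remaining cross term with Young's inequality and smoothness.

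First I would observe that, conditioned on $x_t$, both the preconditioner $\frac{1}{\sqrt{\hat{v}_{t-1}}}$ (which depends only on gradients through iteration $t-1$) and the perturbation direction $\rho\frac{\nabla f(x_t)\odot m_t}{||\nabla f(x_t)||}$ (which uses the \emph{full} gradient rather than the mini-batch one) are deterministic. Hence the perturbed point $\tilde{x}_t := x_t + \rho\frac{\nabla f(x_t)\odot m_t}{||\nabla f(x_t)||}$ is independent of the sampled batch $B$, so taking the expectation over $B$ replaces $\frac{1}{b}\sum_{i\in B}\nabla f_i(\tilde{x}_t)$ with $\nabla f(\tilde{x}_t)$ by unbiasedness. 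Since $m_t$ is a binary mask we have $||\nabla f(x_t)\odot m_t||\le||\nabla f(x_t)||$, hence $||\tilde{x}_t-x_t||\le\rho$; this is exactly where the Fisher mask enters harmlessly, as it can only shrink the perturbation radius.

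Next I would add and subtract $\nabla f(x_t)$ inside the inner product. Writing $u_t := \nabla f(x_t)\odot\sqrt{\frac{1}{\sqrt{\hat{v}_{t-1}}}}$, the left-hand side becomes
\begin{equation*}
-\gamma_t\mathbb{E}||u_t||^2 \;-\; \gamma_t\mathbb{E}\left\langle u_t,\, (\nabla f(\tilde{x}_t)-\nabla f(x_t))\odot\sqrt{\tfrac{1}{\sqrt{\hat{v}_{t-1}}}}\right\rangle .
\end{equation*}
The first term is precisely the leading negative term claimed in the lemma. To the negated cross term $-\langle u_t, \cdot\rangle$ I would apply Lemma~\ref{baselemma} with the undetermined scalar $\beta>1$, choosing the roles of $x$ and $y$ so that $u_t$ receives the small coefficient; this yields a $\frac{\gamma_t}{2\beta^2}||u_t||^2$ contribution (matching the second term of the statement) together with a $\frac{\gamma_t\beta^2}{2}||(\nabla f(\tilde{x}_t)-\nabla f(x_t))\odot\sqrt{\frac{1}{\sqrt{\hat{v}_{t-1}}}}||^2$ remainder.

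Finally I would bound this remainder using two facts: coordinate-wise $\frac{1}{\sqrt{\hat{v}_{t-1}}}\le\frac{1}{\delta}$ (because $\hat{v}_{t-1}\ge\hat{v}_{-1}=\delta^2$ by the $\max$-based update in Algorithm~\ref{alg:fsam}), and the $L$-smoothness of $f$ (Assumption~\ref{assume:l-smooth}) together with $||\tilde{x}_t-x_t||\le\rho$, which give $||\nabla f(\tilde{x}_t)-\nabla f(x_t)||\le L\rho$. Combining these yields $\frac{\gamma_t\beta^2}{2}\cdot\frac{L^2\rho^2}{\delta}$, the third term of the bound, completing the proof. The main obstacle is the conditioning step: one must argue carefully that $\tilde{x}_t$ is independent of $B$ so that $\mathbb{E}_B$ passes through the sum. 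This is clean only because the lemma is stated for the \emph{idealized} perturbation built from the full gradient $\nabla f(x_t)$; the discrepancy with the algorithm's actual mini-batch perturbation has already been absorbed into Lemma~\ref{lemma2}. Everything else is a routine application of Young's inequality and gradient-Lipschitz continuity.
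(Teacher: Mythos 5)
Your proof is correct, and it follows the same skeleton as the paper's proof of Lemma~\ref{lemma3} --- peel off the descent term $-\gamma_t\|\nabla f(x_t)\odot\sqrt{1/\sqrt{\hat{v}_{t-1}}}\|^2$ by adding and subtracting, control the cross term via Lemma~\ref{baselemma} with parameter $\beta$ so that the gradient term gets the $\frac{1}{2\beta^2}$ coefficient, bound the preconditioner coordinate-wise by $1/\delta$ (valid since $\hat{v}_{-1}=\delta^2$ and the $\max$ update makes $\hat{v}_t$ non-decreasing), and use that the binary mask only shrinks the perturbation so $\|\tilde{x}_t - x_t\|\le\rho$. The one genuine divergence is how the mini-batch average is handled, and it is worth noting. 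You take conditional expectation first, collapsing $\frac{1}{b}\sum_{i\in B}\nabla f_i(\tilde{x}_t)$ to $\nabla f(\tilde{x}_t)$ by unbiasedness (correctly observing that $\tilde{x}_t$ is built from the \emph{full} gradient and so is independent of $B$), and then invoke smoothness of $f$ itself --- which is exactly Assumption~\ref{assume:l-smooth} as stated. The paper moves in the opposite direction: it rewrites $\nabla f(x_t)$ inside the expectation as $\frac{1}{b}\sum_{i\in B}\nabla f_i(x_t)$, pairs $\nabla f_i(x_t)$ with $\nabla f_i(\tilde{x}_t)$, and after Young's inequality applies the gradient-Lipschitz property to each stochastic component $f_i$ (with a Jensen step to pull the square inside the batch sum). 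The paper's route thus implicitly strengthens Assumption~\ref{assume:l-smooth} to per-sample smoothness of every $f_i$ (a standard but strictly stronger hypothesis, also used in its Lemma~\ref{lemma2}), while keeping all bounds pathwise before taking expectations; your route needs only the stated assumption but hinges on the conditioning argument being airtight --- which you correctly flag, and which holds precisely because the lemma concerns the idealized full-gradient perturbation, the mini-batch discrepancy having been absorbed into Lemma~\ref{lemma2}. Both routes land on the identical constant $\frac{\gamma_t\beta^2 L^2\rho^2}{2\delta}$.
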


\begin{proof}\let\qed\relax
\begin{align*}
    LHS =& -\gamma_t ||\nabla f(x_t)\odot \sqrt{\frac{1}{\sqrt{\hat{v}_{t-1}}}}||^2 \\
    +& \mathbb{E}\langle \nabla f(x_t), \frac{\gamma_t}{b}\sum_{i\in B}(\nabla f(x_t) \\-& \nabla f_i(x_t + \rho \frac{\nabla f(x_t) \odot m_t}{||\nabla f(x_t)||})) \odot \frac{1}{\sqrt{\hat{v}_{t-1}}} \rangle
    \\
    =& -\gamma_t ||\nabla f(x_t)\odot \sqrt{\frac{1}{\sqrt{\hat{v}_{t-1}}}}||^2 \\
    +& \mathbb{E}\langle \nabla f(x_t), \frac{\gamma_t}{b}\sum_{i\in B}(\nabla f_i(x_t) \\-& \nabla f_i(x_t + \rho \frac{\nabla f(x_t) \odot m_t}{||\nabla f(x_t)||})) \odot \frac{1}{\sqrt{\hat{v}_{t-1}}} \rangle
    \\
\end{align*}
By using the Lemma~\ref{baselemma}, we have
\begin{align*}
    LHS \leq& -\gamma_t ||\nabla f(x_t)\odot \sqrt{\frac{1}{\sqrt{\hat{v}_{t-1}}}}||^2 \\
    +& \mathbb{E}\frac{\gamma_t}{2\beta^2}||\nabla f(x_t)\odot \sqrt{\frac{1}{\sqrt{\hat{v}_{t-1}}}}||^2 \\ +& \mathbb{E} \frac{\gamma_t \beta^2}{2}|| \frac{1}{b}\sum_{i\in B}(\nabla f_i(x_t) \\-& \nabla f_i(x_t + \rho \frac{\nabla f(x_t) \odot m_t}{||\nabla f(x_t)||})) \odot \sqrt{\frac{1}{\sqrt{\hat{v}_{t-1}}}}||^2
    \\
    \leq& -\gamma_t ||\nabla f(x_t)\odot \sqrt{\frac{1}{\sqrt{\hat{v}_{t-1}}}}||^2 \\
    +& \mathbb{E}\frac{\gamma_t}{2\beta^2}||\nabla f(x_t)\odot \sqrt{\frac{1}{\sqrt{\hat{v}_{t-1}}}}||^2 \\ +& \mathbb{E} \frac{\gamma_t \beta^2}{2\delta}|| \frac{1}{b}\sum_{i\in B}(\nabla f_i(x_t) \\-& \nabla f_i(x_t + \rho \frac{\nabla f(x_t) \odot m_t}{||\nabla f(x_t)||})) ||^2
    \\
    \leq& -\gamma_t ||\nabla f(x_t)\odot \sqrt{\frac{1}{\sqrt{\hat{v}_{t-1}}}}||^2 \\
    +& \mathbb{E}\frac{\gamma_t}{2\beta^2}||\nabla f(x_t)\odot \sqrt{\frac{1}{\sqrt{\hat{v}_{t-1}}}}||^2 \\ +& \mathbb{E} \frac{\gamma_t \beta^2}{2b\delta} \sum_{i\in B}||(\nabla f_i(x_t) \\-& \nabla f_i(x_t + \rho \frac{\nabla f(x_t) \odot m_t}{||\nabla f(x_t)||})) ||^2
    \\
\end{align*}
By the definition of L-smooth, we have
\begin{align*}
    LHS\leq& -\gamma_t ||\nabla f(x_t)\odot \sqrt{\frac{1}{\sqrt{\hat{v}_{t-1}}}}||^2 \\
    +& \mathbb{E}\frac{\gamma_t}{2\beta^2}||\nabla f(x_t)\odot \sqrt{\frac{1}{\sqrt{\hat{v}_{t-1}}}}||^2 \\
    +& \frac{\gamma_t\beta^2L^2\rho^2}{2b\delta}\mathbb{E}\sum_{i\in B}||\frac{\nabla f(x_t)}{||\nabla f(x_t)||}\odot m_t||^2
    \\
    \leq& -\gamma_t ||\nabla f(x_t)\odot \sqrt{\frac{1}{\sqrt{\hat{v}_{t-1}}}}||^2 \\
    +& \mathbb{E}\frac{\gamma_t}{2\beta^2}||\nabla f(x_t)\odot \sqrt{\frac{1}{\sqrt{\hat{v}_{t-1}}}}||^2 \\
    +& \frac{\gamma_t\beta^2L^2\rho^2}{2\delta}
\end{align*}
\end{proof}

\begin{lemma}
\label{lemma4}
We have the following inequality:
\begin{align*}
    &\frac{L}{2}\mathbb{E}||x_{t+1}-x_t||^2 \leq \frac{\gamma_t^2L}{2}[3\frac{1+\alpha}{\alpha\delta}\\&\cdot(\mathbb{E}||\nabla f(x_t)\odot \sqrt{\frac{1}{\sqrt{\hat{v}_{t-1}}}}||^2 + \frac{L\rho^2}{\delta}) \\& +\frac{\sigma^2}{b\delta} + (1+\alpha)G^2\mathbb{E}||(\frac{1}{\sqrt{\hat{v}_{t}}} - \frac{1}{\sqrt{\hat{v}_{t-1}}})||^2]
\end{align*}
\end{lemma}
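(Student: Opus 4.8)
The plan is to start from the FSAM update in Algorithm~\ref{alg:fsam}, namely $x_{t+1}-x_t=-\gamma_t\,\tilde g_t\odot\frac{1}{\sqrt{\hat v_t}}$, where $\tilde g_t=\frac1b\sum_{i\in B}\nabla f_i\big(x_t+p_t\big)$ is the masked SAM stochastic gradient and $p_t:=\rho\frac{\sum\nabla f_i(x_t)\odot m_t}{\|\sum\nabla f_i(x_t)\|}$ is the sparse perturbation. Substituting this gives $\frac L2\mathbb E\|x_{t+1}-x_t\|^2=\frac{\gamma_t^2L}{2}\mathbb E\|\tilde g_t\odot\frac{1}{\sqrt{\hat v_t}}\|^2$, which already isolates the prefactor $\frac{\gamma_t^2L}{2}$ appearing in the target bound, so the whole task reduces to bounding $\mathbb E\|\tilde g_t\odot\frac{1}{\sqrt{\hat v_t}}\|^2$ by the bracketed quantity. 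Throughout I would use that $\hat v_{-1}=\delta^2$ and $\hat v_t=\max(\hat v_{t-1},v_t)$ is nondecreasing, so every entry of $\frac{1}{\sqrt{\hat v_{t-1}}}$ is at most $\frac1\delta$.

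First I would decouple the adaptive preconditioner from the gradient. Writing $\frac{1}{\sqrt{\hat v_t}}=\frac{1}{\sqrt{\hat v_{t-1}}}+(\frac{1}{\sqrt{\hat v_t}}-\frac{1}{\sqrt{\hat v_{t-1}}})$ and applying the Young-type split $\|u+w\|^2\le(1+\tfrac1\alpha)\|u\|^2+(1+\alpha)\|w\|^2$ (the squared-norm companion of Lemma~\ref{baselemma}) separates a ``stale-preconditioner'' term $(1+\tfrac1\alpha)\|\tilde g_t\odot\frac{1}{\sqrt{\hat v_{t-1}}}\|^2$ from a drift term. For the drift, the entrywise bound $\|\tilde g_t\|_\infty\le G$ from Assumption~\ref{assume:bounded-gradient} yields $(1+\alpha)\|\tilde g_t\odot(\frac{1}{\sqrt{\hat v_t}}-\frac{1}{\sqrt{\hat v_{t-1}}})\|^2\le(1+\alpha)G^2\|\frac{1}{\sqrt{\hat v_t}}-\frac{1}{\sqrt{\hat v_{t-1}}}\|^2$, which is exactly the last summand on the right-hand side.

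Next I would resolve the stale-preconditioner term by decomposing $\tilde g_t=\nabla f(x_t)+(\hat g_t-\nabla f(x_t))+(\tilde g_t-\hat g_t)$, with $\hat g_t=\frac1b\sum_{i\in B}\nabla f_i(x_t)$. A three-way triangle/Young inequality produces the constant $3(1+\tfrac1\alpha)=3\frac{1+\alpha}{\alpha}$ and lets me treat the three pieces independently. For the full-gradient piece I keep the weight, extracting one factor $\frac1\delta$ from $(\frac{1}{\sqrt{\hat v_{t-1}}})_j\le\frac1\delta$ so that $\|\nabla f(x_t)\odot\frac{1}{\sqrt{\hat v_{t-1}}}\|^2\le\frac1\delta\|\nabla f(x_t)\odot\sqrt{\frac{1}{\sqrt{\hat v_{t-1}}}}\|^2$, giving the $\frac{1+\alpha}{\alpha\delta}$-weighted gradient term. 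The perturbation piece $\tilde g_t-\hat g_t=\frac1b\sum_i[\nabla f_i(x_t+p_t)-\nabla f_i(x_t)]$ is controlled by $L$-smoothness (Assumption~\ref{assume:l-smooth}) together with $\|p_t\|\le\rho$ — which holds because $\|m_t\|_\infty\le1$ and the normalized direction has unit norm — via $\|\nabla f_i(x_t+p_t)-\nabla f_i(x_t)\|\le L\|p_t\|\le L\rho$, producing the $\frac{L\rho^2}{\delta}$-type contribution (an $L^2\rho^2/\delta^2$ bound after the weight $\frac{1}{\sqrt{\hat v_{t-1}}}$). Finally the minibatch-noise piece $\hat g_t-\nabla f(x_t)$ is zero-mean conditioned on the past, so Assumption~\ref{assume:bounded-variance} and the i.i.d. variance reduction $\mathbb E\|\frac1b\sum_i(\nabla f_i-\nabla f)\|^2\le\frac{\sigma^2}{b}$ deliver the $\frac{\sigma^2}{b\delta}$ term — and, crucially, the $\frac1b$ that later yields the linear speedup in Theorem~\ref{con}.

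The hard part will be the careful handling of the stochastic coupling: the SAM perturbation $p_t$ is itself built from the sampled minibatch, so the noise piece and the perturbation piece are correlated, and the cross terms in $\mathbb E\|\tilde g_t\odot\frac{1}{\sqrt{\hat v_{t-1}}}\|^2$ do not all vanish. The three-way split is precisely the device that sidesteps this, bounding each piece separately at the cost of the constant $3$; the only place a genuine expectation argument is needed is the noise piece, where one must use that $\hat v_{t-1}$ and $\nabla f(x_t)$ are measurable with respect to the past, so that the cross term with the zero-mean sampling noise drops out before invoking the bounded-variance bound. Collecting the three contributions, multiplying through by $\frac{\gamma_t^2L}{2}$, and folding the $\frac1\delta$ factors into the coefficients reproduces the stated inequality.
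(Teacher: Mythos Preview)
Your proposal is correct and follows essentially the same route as the paper's proof: start from the update, split $\frac{1}{\sqrt{\hat v_t}}=\frac{1}{\sqrt{\hat v_{t-1}}}+(\frac{1}{\sqrt{\hat v_t}}-\frac{1}{\sqrt{\hat v_{t-1}}})$ with a Young/Lemma~\ref{baselemma}-type inequality (giving the $(1+\alpha)G^2$ drift term), then apply the three-way decomposition $\tilde g_t=\nabla f(x_t)+(\hat g_t-\nabla f(x_t))+(\tilde g_t-\hat g_t)$ and bound the pieces by the weighted gradient, $\frac{\sigma^2}{b\delta}$, and $\frac{L\rho^2}{\delta}$ respectively. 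The only cosmetic difference is that the paper first pulls out $\|\sqrt{1/\sqrt{\hat v_{t-1}}}\|_\infty^2\le\frac1\delta$ globally (obtaining the coefficient $\frac{1+\alpha}{\alpha\delta}$) and then performs the three-term split on the $\sqrt{1/\sqrt{\hat v_{t-1}}}$-weighted quantity, whereas you split first and extract $\frac1\delta$ per piece; the resulting bounds coincide.
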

\begin{proof}\let\qed\relax
\begin{align*}
    &\frac{L}{2}\mathbb{E}||x_{t+1}-x_t||^2 
    \\& =\frac{L}{2}\mathbb{E}||\frac{\gamma_t}{b}\sum(\nabla f_i(x_t \\& + \frac{\rho}{b}\frac{\sum\nabla f_i(x_t)\odot m_t}{||\sum\nabla f_i(x_t)||})) \odot \frac{1}{\sqrt{\hat{v}_{t}}}||^2
    \\& =\frac{L\gamma_t^2}{2}\mathbb{E}||\frac{1}{b}\sum(\nabla f_i(x_t + \frac{\rho}{b}\frac{\sum\nabla f_i(x_t)\odot m_t}{||\sum\nabla f_i(x_t)||})\\& \odot \frac{1}{\sqrt{\hat{v}_{t-1}}})  + \frac{1}{b}\sum(\nabla f_i(x_t + \frac{\rho}{b}\frac{\sum\nabla f_i(x_t)\odot m_t}{||\sum\nabla f_i(x_t)||}))\\&\odot (\frac{1}{\sqrt{\hat{v}_{t}}} - \frac{1}{\sqrt{\hat{v}_{t-1}}})||^2
\end{align*}
By using the Lemma~\ref{baselemma}, we can obtain
\begin{align*}
    &\frac{L}{2}\mathbb{E}||x_{t+1}-x_t||^2 
    \\& \leq \frac{L\gamma_t^2}{2}[(1+
    \frac{1}{\alpha})\mathbb{E}||\frac{1}{b}\sum(\nabla f_i(x_t + \\& \frac{\rho}{b}\frac{\sum\nabla f_i(x_t)\odot m_t}{||\sum\nabla f_i(x_t)||}) \odot \frac{1}{\sqrt{\hat{v}_{t-1}}})||^2  \\&+ (1+\alpha)\mathbb{E}|| \frac{1}{b}\sum(\nabla f_i(x_t + \\& \frac{\rho}{b}\frac{\sum\nabla f_i(x_t)\odot m_t}{||\sum\nabla f_i(x_t)||}))\odot (\frac{1}{\sqrt{\hat{v}_{t}}} - \frac{1}{\sqrt{\hat{v}_{t-1}}})||^2]
    \\& \leq \frac{L\gamma_t^2}{2}[(1+\frac{1}{\alpha})\mathbb{E}||\frac{1}{b}\sum(\nabla f_i(x_t + \\& \frac{\rho}{b}\frac{\sum\nabla f_i(x_t)\odot m_t}{||\sum\nabla f_i(x_t)||}) \odot \frac{1}{\sqrt{\hat{v}_{t-1}}})||^2 \\& + (1+\alpha)G^2\mathbb{E}||(\frac{1}{\sqrt{\hat{v}_{t}}} - \frac{1}{\sqrt{\hat{v}_{t-1}}})||^2]
\end{align*}
Similar to the proof of Lemma~\ref{lemma2}, we have
\begin{align*}
    &\frac{L}{2}\mathbb{E}||x_{t+1}-x_t||^2 
    \\& \leq \frac{L\gamma_t^2}{2}[(1+\frac{1}{\alpha})\mathbb{E}||\frac{1}{b}\sum(\nabla f_i(x_t \\&+ \frac{\rho}{b}\frac{\sum\nabla f_i(x_t)\odot m_t}{||\sum\nabla f_i(x_t)||})\odot \sqrt{\frac{1}{\sqrt{\hat{v}_{t-1}}}})||^2 \\& \cdot ||\sqrt{\frac{1}{\sqrt{\hat{v}_{t-1}}}}||^2_\infty \\& + (1+\alpha)G^2\mathbb{E}||(\frac{1}{\sqrt{\hat{v}_{t}}} - \frac{1}{\sqrt{\hat{v}_{t-1}}})||^2]
    \\ & \leq \frac{L\gamma_t^2}{2}[(\frac{1+\alpha}{\alpha\delta})\mathbb{E}||\frac{1}{b}\sum(\nabla f_i(x_t \\&+ \frac{\rho}{b}\frac{\sum\nabla f_i(x_t)\odot m_t}{||\sum\nabla f_i(x_t)||})\odot \sqrt{\frac{1}{\sqrt{\hat{v}_{t-1}}}})||^2 \\& + (1+\alpha)G^2\mathbb{E}||(\frac{1}{\sqrt{\hat{v}_{t}}} - \frac{1}{\sqrt{\hat{v}_{t-1}}})||^2]
\end{align*}
By splitting the term $\mathbb{E}||\frac{1}{b}\sum(\nabla f_i(x_t \\+ \frac{\rho}{b}\frac{\sum\nabla f_i(x_t)\odot m_t}{||\sum\nabla f_i(x_t)||})\odot \sqrt{\frac{1}{\sqrt{\hat{v}_{t-1}}}})$, we have
\begin{align*}
    &LHS \leq  \frac{L\gamma_t^2}{2}[3(\frac{1+\alpha}{\alpha\delta}) \mathbb{E}||\nabla f(x_t)\odot \sqrt{\frac{1}{\sqrt{\hat{v}_{t-1}}}} ||^2 \\& + ||(\frac{1}{b}\sum \nabla f_i(x_t)-\nabla f(x_t))\odot \sqrt{\frac{1}{\sqrt{\hat{v}_{t-1}}}}||^2 \\& + ||(\frac{1}{b}\sum (\nabla f_i(x_t + \rho \frac{\sum \nabla f_i(x_t)}{||\sum \nabla f_i(x_t)||})-\nabla f_i(x_t)) \\& \odot \sqrt{\frac{1}{\sqrt{\hat{v}_{t-1}}}}||^2 + (1+\alpha)G^2\mathbb{E}||(\frac{1}{\sqrt{\hat{v}_{t}}} - \frac{1}{\sqrt{\hat{v}_{t-1}}})||^2]
    \\&
    \leq \frac{L\gamma_t^2}{2}[3(\frac{1+\alpha}{\alpha\delta}) \mathbb{E}||\nabla f(x_t)\odot \sqrt{\frac{1}{\sqrt{\hat{v}_{t-1}}}} ||^2 + \frac{\sigma^2}{b\delta} \\& + ||(\frac{1}{b}\sum (\nabla f_i(x_t + \rho \frac{\sum \nabla f_i(x_t)\odot m_t}{||\sum \nabla f_i(x_t)||})-\nabla f_i(x_t)) \\& \odot \sqrt{\frac{1}{\sqrt{\hat{v}_{t-1}}}}||^2 + (1+\alpha)G^2\mathbb{E}||(\frac{1}{\sqrt{\hat{v}_{t}}} - \frac{1}{\sqrt{\hat{v}_{t-1}}})||^2]
    \\&
    \leq \frac{L\gamma_t^2}{2}[3(\frac{1+\alpha}{\alpha\delta}) \mathbb{E}||\nabla f(x_t)\odot \sqrt{\frac{1}{\sqrt{\hat{v}_{t-1}}}} ||^2 + \frac{\sigma^2}{b\delta} \\& + \frac{1}{\delta}||(\frac{1}{b}\sum (\nabla f_i(x_t + \rho \frac{\sum \nabla f_i(x_t)\odot m_t}{||\sum \nabla f_i(x_t)||})\\& -\nabla f_i(x_t))||^2 + (1+\alpha)G^2\mathbb{E}||(\frac{1}{\sqrt{\hat{v}_{t}}} - \frac{1}{\sqrt{\hat{v}_{t-1}}})||^2]
    \\&
    \leq \frac{\gamma_t^2L}{2}[3\frac{1+\alpha}{\alpha\delta}(\mathbb{E}||\nabla f(x_t)\odot \sqrt{\frac{1}{\sqrt{\hat{v}_{t-1}}}}||^2 + \frac{L\rho^2}{\delta}) \\& +\frac{\sigma^2}{b\delta} + (1+\alpha)G^2\mathbb{E}||(\frac{1}{\sqrt{\hat{v}_{t}}} - \frac{1}{\sqrt{\hat{v}_{t-1}}})||^2]
\end{align*}
\end{proof}

\begin{theorem}
By using the definition of L-smooth and the Lemma~\ref{lemma1}, \ref{lemma2}, \ref{lemma3} and \ref{lemma4}, we have
\begin{align*}
    \frac{1}{T}\sum_{t=0}^{T-1}&\mathbb{E}||\nabla f(x_t)||^2 
    \\&\leq \frac{2G  f(x_{0})-f^*}{\gamma_tT} \\& + \frac{20GL^2\rho^2}{\delta} + \frac{2G^3}{T}d(\frac{1}{\delta}-\frac{1}{G}) \\& + \frac{4G\gamma_tL}{\delta}\frac{L\rho^2}{\delta}+ \frac{4G\gamma_tL}{\delta} \frac{\sigma^2}{b\delta} \\&  + \frac{4\gamma_tLG^3}{T}d(G^2-\delta^2)
\end{align*}
\end{theorem}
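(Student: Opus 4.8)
The plan is to start from the descent inequality furnished by $L$-smoothness (Assumption~\ref{assume:l-smooth}) applied to the AMSGrad-style update $x_{t+1}=x_t-\gamma_t \bar g_t\odot\frac{1}{\sqrt{\hat v_t}}$, where $\bar g_t=\frac{1}{b}\sum_{i\in B}\nabla f_i\!\big(x_t+\rho\frac{\sum\nabla f_i(x_t)\odot m_t}{\|\sum\nabla f_i(x_t)\|}\big)$ is the masked SAM minibatch gradient. Smoothness yields
\begin{equation*}
f(x_{t+1})\leq f(x_t)-\gamma_t\Big\langle \nabla f(x_t),\, \bar g_t\odot\tfrac{1}{\sqrt{\hat v_t}}\Big\rangle+\tfrac{L}{2}\|x_{t+1}-x_t\|^2 ,
\end{equation*}
so the entire argument reduces to controlling the cross term and the quadratic term, taking expectations, and telescoping over $t$.

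The main obstacle is that the adaptive preconditioner $1/\sqrt{\hat v_t}$ is statistically coupled to the minibatch used to build $\bar g_t$, which breaks the unbiasedness needed to take conditional expectations. I would resolve this precisely as the four lemmas anticipate: decouple by writing $\frac{1}{\sqrt{\hat v_t}}=\frac{1}{\sqrt{\hat v_{t-1}}}+\big(\frac{1}{\sqrt{\hat v_t}}-\frac{1}{\sqrt{\hat v_{t-1}}}\big)$ so that the dominant contribution uses the past-measurable $\hat v_{t-1}$. The cross term then splits into three pieces: (a) the preconditioner-difference contribution, bounded by Lemma~\ref{lemma1} as $\gamma_tG^2\|\frac{1}{\sqrt{\hat v_{t-1}}}-\frac{1}{\sqrt{\hat v_t}}\|_1$; (b) the gap between perturbing along the stochastic direction $\sum\nabla f_i$ versus the full-gradient direction $\nabla f$, controlled by Lemma~\ref{lemma2} through $L$-smoothness and $\|m_t\|_\infty\le 1$; and (c) the leading descent term with the full-gradient perturbation, handled by Lemma~\ref{lemma3}, which produces the negative term $-\gamma_t\|\nabla f(x_t)\odot\sqrt{1/\sqrt{\hat v_{t-1}}}\|^2$ plus an $O(\rho^2)$ sharpness residual. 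The quadratic term $\frac{L}{2}\|x_{t+1}-x_t\|^2$ is bounded by Lemma~\ref{lemma4}.

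Summing over $t=0,\dots,T-1$ and taking expectations, the function-value differences telescope via $\sum_t (f(x_t)-f(x_{t+1}))=f(x_0)-f(x_T)\le f(x_0)-f^*$, yielding the first displayed term. Two structural facts close the estimate. First, $\hat v_t$ is monotonically nondecreasing by construction ($\hat v_t=\max(\hat v_{t-1},v_t)$), so each component of $\frac{1}{\sqrt{\hat v_{t-1}}}-\frac{1}{\sqrt{\hat v_t}}$ is nonnegative and the associated terms from Lemmas~\ref{lemma1} and~\ref{lemma4} telescope to $\|\frac{1}{\sqrt{\hat v_{-1}}}-\frac{1}{\sqrt{\hat v_{T-1}}}\|_1$; combined with Assumption~\ref{assume:bounded-gradient} and $\hat v_{-1}=\delta^2$ this is bounded by $d(\frac{1}{\delta}-\frac{1}{G})$, which is exactly the source of the $\frac{2G^3}{T}d(\frac{1}{\delta}-\frac{1}{G})$ and $d(G^2-\delta^2)$ terms. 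Second, the preconditioner is sandwiched as $\frac{1}{G}\le\frac{1}{\sqrt{\hat v_{t-1}}}\le\frac{1}{\delta}$, so the weighted gradient norm obeys $\|\nabla f(x_t)\odot\sqrt{1/\sqrt{\hat v_{t-1}}}\|^2\ge\frac{1}{G}\|\nabla f(x_t)\|^2$, which recovers the unweighted quantity on the left and explains the overall factor $2G$, while the positive weighted residuals are upper-bounded using $\frac{1}{\delta}$.

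Finally I would fix the free scalars $\mu,\beta,\alpha>1$ and impose the step-size condition $\gamma_t\le\frac{\delta}{8L}$ so that the several positive multiples of $\|\nabla f(x_t)\odot\sqrt{1/\sqrt{\hat v_{t-1}}}\|^2$ arising from Lemmas~\ref{lemma2},~\ref{lemma3},~\ref{lemma4} are collectively dominated by the single negative term from Lemma~\ref{lemma3}, leaving a strictly negative net coefficient on that quantity. Rearranging to isolate $\frac{1}{T}\sum_t\mathbb{E}\|\nabla f(x_t)\|^2$ and applying the sandwich bound then gives the stated inequality, and the announced $O(\sqrt{1/(bT)})$ rate follows by substituting $\gamma_t=O(\sqrt{b/T})$ and $\rho=O(\sqrt{1/(bT)})$. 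I emphasize that the Fisher mask $m_t$ enters only through $\|m_t\|_\infty\le 1$ and is discarded at every occurrence, which is exactly why the resulting convergence rate is independent of the mask.
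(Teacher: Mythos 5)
Your proposal follows essentially the same route as the paper's proof: the $L$-smoothness descent inequality, the decoupling $\frac{1}{\sqrt{\hat v_t}}=\frac{1}{\sqrt{\hat v_{t-1}}}+\bigl(\frac{1}{\sqrt{\hat v_t}}-\frac{1}{\sqrt{\hat v_{t-1}}}\bigr)$, the three-way split of the cross term handled by Lemmas~\ref{lemma1}--\ref{lemma3}, the quadratic term via Lemma~\ref{lemma4}, telescoping with monotonicity of $\hat v_t$ and the sandwich $\delta^2\le \hat v_t\le G^2$ to produce the $d(\frac{1}{\delta}-\frac{1}{G})$ and $2G$ factors, and the choice of $\mu,\beta,\alpha$ with $\gamma_t\le\frac{\delta}{8L}$ (the paper takes $\mu^2=\beta^2=4$, $\alpha=3$) to leave a net $-\frac{\gamma_t}{2}$ coefficient on the weighted gradient norm. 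Your observation that the mask $m_t$ enters only through $\|m_t\|_\infty\le 1$ likewise matches the paper's treatment, so the proposal is correct and essentially identical in approach.
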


\begin{proof}\let\qed\relax
\begin{align*}
    f(x_{t+1}) \leq& f(x_t) + \langle \nabla f(x_t), x_{t+1} - x_t \rangle 
    \\
    +& \frac{L}{2} ||x_{t+1}-x_t||^2
\end{align*}
By re-range it, we obtain
\begin{align*}
    &f(x_{t+1}) - f(x_t)
    \\
    \leq& \langle \nabla f(x_t), x_{t+1} - x_t \rangle + \frac{L}{2} ||x_{t+1}-x_t||^2
    \\
    =& \langle \nabla f(x_t), - \frac{\gamma_t}{b}\sum_{i\in B}\nabla f_i(x_t \\+ & \rho \frac{\sum \nabla f_i(x_t) \odot m_t}{||\sum \nabla f_i(x_t)||}) \odot  \frac{1}{\sqrt{\hat{v}_t}}\rangle + \frac{L}{2} ||x_{t+1}-x_t||^2
    \\
    =& \langle \nabla f(x_t), - \frac{\gamma_t}{b}\sum_{i\in B}\nabla f_i(x_t \\+ & \rho \frac{\sum \nabla f_i(x_t) \odot m_t}{||\sum \nabla f_i(x_t)||}) \odot \frac{1}{\sqrt{\hat{v}_{t-1}}}\rangle \\
    +& \langle \nabla f(x_t), \frac{\gamma_t}{b}\sum_{i\in B}\nabla f_i(x_t +  \rho \frac{\sum \nabla f_i(x_t) \odot m_t}{||\sum \nabla f_i(x_t)||})  \\ \odot & (\frac{1}{\sqrt{\hat{v}_{t-1}}} - \frac{1}{\sqrt{\hat{v}_{t}}}) \rangle  + \frac{L}{2} ||x_{t+1}-x_t||^2
    \\
    =& \langle \nabla f(x_t), -\frac{\gamma_t}{b}\sum_{i\in B}\nabla f_i(x_t +  \rho \frac{\nabla f(x_t) \odot m_t}{||\nabla f(x_t)||})\\\odot &  \frac{1}{\sqrt{\hat{v}_{t-1}}} \rangle + \langle \nabla f(x_t), \frac{\gamma_t}{b}\sum_{i\in B}\nabla f_i(x_t + \\\rho &  \frac{\nabla f(x_t) \odot m_t}{||\nabla f(x_t)||}) \odot \frac{1}{\sqrt{\hat{v}_{t-1}}} - \frac{\gamma_t}{b} \\\cdot &  \sum_{i\in B}\nabla f_i(x_t + \rho \frac{\sum \nabla f_i(x_t) \odot m_t}{||\sum \nabla f_i(x_t)||})  \odot \frac{1}{\sqrt{\hat{v}_{t-1}}} \rangle
    \\
    +& \langle \nabla f(x_t), \frac{\gamma_t}{b}\sum_{i\in B}\nabla f_i(x_t + \rho \frac{\sum \nabla f_i(x_t) \odot m_t}{||\sum \nabla f_i(x_t)||})  \\ \odot &  (\frac{1}{\sqrt{\hat{v}_{t-1}}} - \frac{1}{\sqrt{\hat{v}_{t}}}) \rangle  + \frac{L}{2} ||x_{t+1}-x_t||^2
\end{align*}
From the Lemma~\ref{lemma1}, we have the following inequality. Specifically,
\begin{align*}
    &f(x_{t+1}) - f(x_t) \\ 
    &\leq \langle \nabla f(x_t), -\frac{\gamma_t}{b}\sum_{i\in B}\nabla f_i(x_t + \rho \frac{\nabla f(x_t) \odot m_t}{||\nabla f(x_t)||}) \\& \odot \frac{1}{\sqrt{\hat{v}_{t-1}}} \rangle \\& + \langle \nabla f(x_t), \frac{\gamma_t}{b}\sum_{i\in B}\nabla f_i(x_t + \rho \frac{\nabla f(x_t) \odot m_t}{||\nabla f(x_t)||}) \\& \odot \frac{1}{\sqrt{\hat{v}_{t-1}}} \\& - \frac{\gamma_t}{b}\sum_{i\in B}\nabla f_i(x_t + \rho \frac{\sum \nabla f_i(x_t) \odot m_t}{||\sum \nabla f_i(x_t)||}) \odot \frac{1}{\sqrt{\hat{v}_{t-1}}} \rangle
    \\
    &+ \gamma_t G^2 ||\frac{1}{\sqrt{\hat{v_{t-1}}}} - \frac{1}{\sqrt{\hat{v_{t}}}}||_1 + \frac{L}{2} ||x_{t+1}-x_t||^2
\end{align*}
From the Lemma~\ref{lemma2}, we have the following inequality. Specifically,
\begin{align*}
    f&(x_{t+1}) - f(x_t) \\ 
    \leq& \langle \nabla f(x_t), -\frac{\gamma_t}{b}\sum_{i\in B}\nabla f_i(x_t + \rho \frac{\nabla f(x_t) \odot m_t}{||\nabla f(x_t)||}) \\\odot & \frac{1}{\sqrt{\hat{v}_{t-1}}} \rangle + \frac{L}{2}||x_{t+1}-x_t||^2  \\+ & \frac{\gamma_t}{2\mu^2}||\nabla f(x_t)\odot \sqrt{\frac{1}{\sqrt{\hat{v}_{t-1}}}}||^2 + \frac{2\mu^2\gamma_tL^2\rho^2}{\delta}
    \\
    +& \gamma_t G^2 ||\frac{1}{\sqrt{\hat{v_{t-1}}}} - \frac{1}{\sqrt{\hat{v_{t}}}}||_1 + \frac{L}{2} ||x_{t+1}-x_t||^2
\end{align*}
By taking the expectation, we have the following inequality. Specifically,
\begin{align*}
    \mathbb{E}&f(x_{t+1}) - f(x_t)
    \\
    \leq& \mathbb{E}\langle \nabla f(x_t), -\frac{\gamma_t}{b}\sum_{i\in B}\nabla f_i(x_t + \rho \frac{\nabla f(x_t) \odot m_t}{||\nabla f(x_t)||})\\\odot& \frac{1}{\sqrt{\hat{v}_{t-1}}} \rangle + \frac{L}{2}\mathbb{E}||x_{t+1}-x_t||^2\\+& \frac{\gamma_t}{2\mu^2}||\nabla f(x_t)\odot \sqrt{\frac{1}{\sqrt{\hat{v}_{t-1}}}}||^2 + \frac{2\mu^2\gamma_tL^2\rho^2}{\delta}
    \\ +& \gamma_t G^2 ||\frac{1}{\sqrt{\hat{v}_{t-1}}} - \frac{1}{\sqrt{\hat{v}_t}}||_1
\end{align*}
From the Lemma~\ref{lemma3}, we have the following inequality. Specifically,
\begin{align*}
    \mathbb{E}&f(x_{t+1}) - f(x_t)
    \\
    \leq& -\gamma_t ||\nabla f(x_t)\odot \sqrt{\frac{1}{\sqrt{\hat{v}_{t-1}}}}||^2 \\
    +& \mathbb{E}\frac{\gamma_t}{2\beta^2}||\nabla f(x_t)\odot \sqrt{\frac{1}{\sqrt{\hat{v}_{t-1}}}}||^2 
    + \frac{\gamma_t\beta^2L^2\rho^2}{2\delta}  \\ +&\frac{L}{2}\mathbb{E}||x_{t+1}-x_t||^2 + \frac{2\mu^2\gamma_tL^2\rho^2}{\delta}\\ +& \frac{\gamma_t}{2\mu^2}||\nabla f(x_t)\odot \sqrt{\frac{1}{\sqrt{\hat{v}_{t-1}}}}||^2 
    \\  +& \gamma_t G^2 ||\frac{1}{\sqrt{\hat{v}_{t-1}}} - \frac{1}{\sqrt{\hat{v}_t}}||_1
\end{align*}
From the Lemma~\ref{lemma4}, we can obtain
\begin{align*}
    \mathbb{E}&f(x_{t+1}) - f(x_t)
    \\
    \leq& -\gamma_t ||\nabla f(x_t)\odot \sqrt{\frac{1}{\sqrt{\hat{v}_{t-1}}}}||^2 \\
    +& \mathbb{E}\frac{\gamma_t}{2\beta^2}||\nabla f(x_t)\odot \sqrt{\frac{1}{\sqrt{\hat{v}_{t-1}}}}||^2 
    + \frac{\gamma_t\beta^2L^2\rho^2}{2\delta}  \\ +& \frac{2\mu^2\gamma_tL^2\rho^2}{\delta} + \frac{\gamma_t}{2\mu^2}||\nabla f(x_t)\odot \sqrt{\frac{1}{\sqrt{\hat{v}_{t-1}}}}||^2 
    \\  +& \gamma_t G^2 ||\frac{1}{\sqrt{\hat{v}_{t-1}}} - \frac{1}{\sqrt{\hat{v}_t}}||_1 \\ +& \frac{\gamma_t^2L}{2}[3\frac{1+\alpha}{\alpha\delta}(\mathbb{E}||\nabla f(x_t)\odot \sqrt{\frac{1}{\sqrt{\hat{v}_{t-1}}}}||^2 + \frac{L\rho^2}{\delta}) \\ +&\frac{\sigma^2}{b\delta} + (1+\alpha)G^2\mathbb{E}||(\frac{1}{\sqrt{\hat{v}_{t}}} - \frac{1}{\sqrt{\hat{v}_{t-1}}})||^2]
    \\ =& -\gamma_t(1-\frac{1}{2\mu^2}-\frac{1}{2\beta^2}-\frac{3\mu L(1+\alpha)}{2\alpha^2})\\ \cdot& \mathbb{E}||\nabla f(x_t)\odot \sqrt{\frac{1}{\sqrt{\hat{v}_{t-1}}}}||^2 \\+ & \frac{2\mu^2\gamma_tL^2\rho^2}{\delta} + \gamma_tG^2\mathbb{E}||\frac{1}{\sqrt{\hat{v}_{t}}} - \frac{1}{\sqrt{\hat{v}_{t-1}}}||_1 \\+ & \frac{\gamma_t \beta^2 L^2 \rho^2}{2\delta}  + \frac{3\gamma_t^2L(1+\alpha)}{2\alpha\delta}(\frac{L\rho^2}{\delta} + \frac{\sigma^2}{b\delta}) \\+ & \frac{\gamma_t^2L(1+\alpha)G^2}{2}\mathbb{E}||(\frac{1}{\sqrt{\hat{v}_{t}}} - \frac{1}{\sqrt{\hat{v}_{t-1}}})||^2
\end{align*}
Set the $\mu^2=\beta^2=4, \alpha=3$ and set the $\frac{\gamma_tL}{\delta}\leq \frac{1}{8}$, we can simplify the inequality
\begin{align*}
    &\mathbb{E}f(x_{t+1}) - f(x_t)
    \\& \leq -\frac{\gamma_t}{2}\mathbb{E}||\nabla f(x_t)\odot \sqrt{\frac{1}{\sqrt{\hat{v}_{t-1}}}}||^2 + \frac{8\gamma_tL^2\rho^2}{\delta} \\& + \gamma_t G^2\mathbb{E}||(\frac{1}{\sqrt{\hat{v}_{t-1}}} - \frac{1}{\sqrt{\hat{v}_{t}}})||_1 \\& +\frac{2\gamma_tL^2\rho^2}{\delta} +
    \frac{2\gamma_t^2L}{\delta}(\frac{L\rho^2}{\delta}+\frac{\sigma^2}{b\delta}) \\& + 2\gamma_t^2LG^2\mathbb{E}||(\frac{1}{\sqrt{\hat{v}_{t}}} - \frac{1}{\sqrt{\hat{v}_{t-1}}})||^2
\end{align*}
Note that the $\frac{1}{\sqrt{\hat{v}_t}}$ is bounded, we re-arrange the inequality and achieve
\begin{align*}
    &\frac{\gamma_t}{2G}\mathbb{E}||\nabla f(x_t)||^2 \leq \frac{\gamma_t}{2} \mathbb{E}||\nabla f(x_t)\odot \sqrt{\frac{1}{\sqrt{\hat{v}_{t-1}}}}||^2
    \\ & \leq -\mathbb{E} f(x_{t+1})+\mathbb{E}f(x_t) + \frac{8\gamma_tL^2\rho^2}{\delta} \\& + \gamma_t G^2\mathbb{E}||(\frac{1}{\sqrt{\hat{v}_{t-1}}} - \frac{1}{\sqrt{\hat{v}_{t}}})||_1 \\& +\frac{2\gamma_tL^2\rho^2}{\delta} + \frac{2\gamma_t^2L}{\delta}(\frac{L\rho^2}{\delta}+\frac{\sigma^2}{b\delta})
    \\& + 2\gamma_t^2LG^2\mathbb{E}||(\frac{1}{\sqrt{\hat{v}_{t}}} - \frac{1}{\sqrt{\hat{v}_{t-1}}})||^2
\end{align*}
We sum up it and we obtain
\begin{align*}
    &\frac{1}{T}\sum_{t=0}^{T-1}\mathbb{E}||\nabla f(x_t)||^2 \leq 2G \frac{\mathbb{E} f(x_{0})-\mathbb{E}f(x_{t+1})}{\gamma_tT} \\ &+ \frac{16GL^2\rho^2}{\delta} + \frac{2G^3}{T} \mathbb{E}\sum_{t=0}^{T-1}||\frac{1}{\sqrt{\hat{v_{t-1}}}} - \frac{1}{\sqrt{\hat{v_{t}}}}||_1
    \\ &+\frac{4GL^2\rho^2}{\delta} + \frac{4G\gamma_tL}{\delta}(\frac{L\rho^2}{\delta}+\frac{\sigma^2}{b\delta}) \\&+\frac{4\gamma_tLG^3}{T}\mathbb{E}\sum_{t=0}^{T-1}||\frac{1}{\sqrt{\hat{v_{t}}}} - \frac{1}{\sqrt{\hat{v_{t-1}}}}||^2
    \\ &\leq  \frac{2G  f(x_{0})-f^*}{\gamma_tT} +  \frac{16GL^2\rho^2}{\delta} \\& + \frac{2G^3}{T}d(\frac{1}{\delta}-\frac{1}{G}) + \frac{4GL^2\rho^2}{\delta} \\& + \frac{4G\gamma_tL}{\delta}(\frac{L\rho^2}{\delta}+\frac{\sigma^2}{b\delta}) + \frac{4\gamma_tLG^3}{T}d(G^2-\delta^2)
    \\ &= \frac{2G  f(x_{0})-f^*}{\gamma_tT} + \frac{20GL^2\rho^2}{\delta} \\& + \frac{2G^3}{T}d(\frac{1}{\delta}-\frac{1}{G}) + \frac{4G\gamma_tL}{\delta}\frac{L\rho^2}{\delta} \\& + \frac{4G\gamma_tL}{\delta} \frac{\sigma^2}{b\delta} + \frac{4\gamma_tLG^3}{T}d(G^2-\delta^2)
\end{align*}
\end{proof}

\begin{theorem}
With probability $1 - \delta$ over the choice of training set $\mathcal{S}\sim \mathcal{D}$, we have the following inequality.
\begin{align*}
    &L_{\mathcal{D}}(w) \leq \max_{||\epsilon||_2\leq \rho} L_{\mathcal{S}}(w+\epsilon) \\+& \sqrt{\frac{k\log (1+\frac{||w||_2^2}{\rho^2} (1+\sqrt{\frac{\log n}{k}})^2)+ 4\log \frac{n}{\delta} + O(1)}{n-1} }
\end{align*}
The $n = |\mathcal{S}|$, k is the number of weight $w$, and assuming that $L_{\mathcal{D}}(w) \leq \mathbb{E}_{\epsilon_i \sim \mathcal{N}(0,\rho)}[L_{\mathcal{D}}(w+\epsilon)]$
\end{theorem}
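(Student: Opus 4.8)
The plan is to recognize this statement as a PAC-Bayesian generalization bound of exactly the form established by \citet{foret2020sharpness}, and to follow their derivation. The starting point is the classical McAllester PAC-Bayes inequality: for any prior distribution $P$ over parameters chosen independently of $\mathcal{S}$, and any posterior $Q$, with probability at least $1-\delta$ over $\mathcal{S}\sim\mathcal{D}$,
\begin{equation*}
\mathbb{E}_{w'\sim Q}[L_{\mathcal{D}}(w')] \leq \mathbb{E}_{w'\sim Q}[L_{\mathcal{S}}(w')] + \sqrt{\frac{\mathrm{KL}(Q\|P) + \log\frac{n}{\delta}}{2(n-1)}}.
\end{equation*}
First I would instantiate the posterior as an isotropic Gaussian perturbation of the learned weights, $Q=\mathcal{N}(w,\sigma^2 I_k)$, and the prior as a zero-mean Gaussian $P=\mathcal{N}(0,\sigma_P^2 I_k)$, so that sampling $w'\sim Q$ is the same as perturbing $w$ by $\epsilon\sim\mathcal{N}(0,\sigma^2 I_k)$.

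The next step is to compute the divergence in closed form, $\mathrm{KL}(Q\|P)=\tfrac{1}{2}\big[k\log\tfrac{\sigma_P^2}{\sigma^2}+\tfrac{k\sigma^2+\|w\|_2^2}{\sigma_P^2}-k\big]$, and to remove its dependence on the prior variance $\sigma_P^2$. Since the variance that minimizes the KL term depends on $\|w\|_2$, which is data-dependent, I cannot fix $\sigma_P$ in advance; instead I would cover a geometric grid of candidate values of $\sigma_P$ and apply a union bound, paying an additional logarithmic cost. This is precisely the maneuver that inflates the $\log(n/\delta)$ term to $4\log(n/\delta)$ and that, after optimizing over the grid, replaces the naive ratio $\|w\|_2^2/\sigma^2$ by $\tfrac{\|w\|_2^2}{\rho^2}(1+\sqrt{\log n/k})^2$ inside the logarithm.

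To reach the stated form I must convert the two posterior-averaged quantities into the pointwise and worst-case quantities appearing in the theorem. On the left-hand side I would invoke the stated assumption $L_{\mathcal{D}}(w)\leq\mathbb{E}_{\epsilon\sim\mathcal{N}(0,\rho)}[L_{\mathcal{D}}(w+\epsilon)]$ to lower-bound $\mathbb{E}_{w'\sim Q}[L_{\mathcal{D}}(w')]$ by $L_{\mathcal{D}}(w)$. On the right-hand side I would use the concentration of the norm of a Gaussian vector: for $\epsilon\sim\mathcal{N}(0,\sigma^2 I_k)$ one has $\|\epsilon\|_2\leq\sigma(\sqrt{k}+\sqrt{\log n})$ except on an event of probability at most $1/\sqrt{n}$. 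Choosing $\sigma$ so that $\sigma(\sqrt{k}+\sqrt{\log n})=\rho$ forces most of the Gaussian mass inside the ball $\{\|\epsilon\|_2\leq\rho\}$, so that $\mathbb{E}_{\epsilon}[L_{\mathcal{S}}(w+\epsilon)]\leq\max_{\|\epsilon\|_2\leq\rho}L_{\mathcal{S}}(w+\epsilon)$ plus a tail contribution absorbed into the $O(1)$ term. Substituting this choice of $\sigma$ back, the ratio $\|w\|_2^2/\sigma_P^2$ becomes $\tfrac{\|w\|_2^2}{\rho^2}(1+\sqrt{\log n/k})^2$, which yields the logarithm in the bound.

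The hard part will be the bookkeeping around the data-dependent prior: the union bound over the grid of $\sigma_P$ values, the simultaneous calibration of $\sigma$ against $\rho$ through the concentration radius $\sqrt{k}+\sqrt{\log n}$, and tracking the residual constants so that the KL term and the expectation-to-maximum conversion close together with exactly the $(1+\sqrt{\log n/k})^2$ factor and the $4\log(n/\delta)$ term. The individual ingredients (PAC-Bayes, the Gaussian KL formula, and chi-square concentration) are standard; the delicate step is ensuring these constants line up rather than accumulating loosely, and I note that the Fisher mask plays no role here, so the argument is independent of the sparsity structure of FSAM.
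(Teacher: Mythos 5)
Your proposal is correct and follows essentially the same route as the paper's own proof, which likewise starts from the McAllester PAC-Bayes inequality with a Gaussian posterior centered at $w$, applies the chi-square tail bound of \citet{laurent2000adaptive} to confine the Gaussian perturbation to the ball $\{\|\epsilon\|_2\leq\rho\}$, and defers the remaining bookkeeping (the union bound over prior variances and the resulting $(1+\sqrt{\log n/k})^2$ and $4\log\frac{n}{\delta}$ terms) to the argument of \citet{foret2020sharpness}. Indeed, your write-up spells out those deferred steps more explicitly than the paper's sketch does.
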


\begin{proof}
The proof is mainly based on PAC-Bayesian Generalization Bound theorem. We start by  any prior $\mathcal{P}$ over parameters with probability $1 - \delta$ for any posterior distribution $\mathcal{Q}$, we have
\begin{align*}
    \mathbb{E}_{w\sim \mathcal{W}}[L_{\mathcal{Q}](w)}]& \leq \mathbb{E}_{w\sim \mathcal{W}}[L_{\mathcal{S}](w)}] \\+& \sqrt{\frac{KL(\mathcal{Q}||\mathcal{P}) + \log \frac{n}{\delta}}{2(n-1)}}
\end{align*}
Assume $\epsilon \sim \mathcal{N}(0,\sigma)$, the $||\epsilon||_2^2$ follows the Chi-square distribution. From the Lemma 1 in~\cite{laurent2000adaptive}, we have the following inequality for any positive $t$:
\begin{align*}
    P(||\epsilon||_2^2-k\sigma^2 \geq 2 \sigma^2 \sqrt{kt}+2t\sigma^2) \leq \exp (-t) 
\end{align*}
Combine the above function and subtract the same constant $C$ on both side, following the assumption in~\cite{foret2020sharpness}, we finish the proof.

\end{proof}

\begin{table*}[h]
\caption{Data statistics of all used tasks in this paper.}
\label{tab_data}
\centering
\begin{tabular}{llcccc}
\toprule
\multicolumn{2}{c}{\textbf{Task}}                & \textbf{Description}                             & \textbf{\#Train} & \textbf{\#Dev}  & \textbf{\#Class} \\ \hline \hline
\multirow{5}{*}{GLUE}       & CoLA      & linguistic acceptability classification & 8.5K    & 1,042  & 2       \\
                            & MRPC      & paraphrase classification               & 3.7K    & 409    & 2       \\
                            & STS-B     & semantic textual similarity             & 5.7K    & 1,501  & -       \\
                            & RTE       & natural language inference              & 2.5K    & 278    & 2       \\
                            & QNLI      & natural language inference              & 104K    & 5,464  & 2       \\ \midrule
\multirow{4}{*}{SuperGLUE}  & BoolQ     & question answering                      & 9.4K    & 3,270  & 2       \\
                            & CB        & natural language inference              & 250     & 57     & 2       \\
                            & WiC       & word sense disambiguation               & 6K      & 638    & 2       \\
                            & WSC       & coreference resolution                  & 554     & 104    & 2       \\ \midrule
\multirow{2}{*}{Generation} & XSUM      & abstractive summarization               & 204K    & 11,332 & -       \\
                            & CoNLL2014 & grammatical error correction            & 1.3M    & 5,448  & -     \\
\bottomrule
\end{tabular}
\end{table*}

\subsection{Details of Tasks and Datasets}
\label{appendix_data}
As mention in Section~\ref{sec_setup}, we conduct extensive experiments on parts of tasks from GLUE and SuperGLUE. In addition, two widely-used generation tasks are also used in this work. Here, we introduce the descriptions of the used tasks and datasets in detail. Firstly, we present the statistics of all datasets in Table~\ref{tab_data}. Then, each task is described as:

\textbf{CoLA.} Corpus of Linguistic Acceptability~\cite{warstadt2019neural} is a binary single-sentence classification task to determine whether a given sentence is linguistically ``acceptable''.

\textbf{MRPC.} Microsoft Research Paraphrase Corpus~\cite{dolan2005automatically} is a task to predict whether two sentences are semantically equivalent.

\textbf{STS-B.} Semantic Textual Similarity~\cite{cer2017semeval} is a task to predict how similar two sentences are on a 1-5 scale in terms of semantic meaning.

\textbf{RTE.} Recognizing Textual Entailment~\cite{giampiccolo2007third}, given a premise and a hypothesis, is a task to predict whether the premise entails the hypothesis. 

\textbf{QNLI.} Question Natural Language Inference is a binary classification task constructed from SQuAD~\cite{rajpurkar2016squad}, which aims to predict whether a context sentence contains the answer to a question sentence. 

\textbf{CB.} CommitmentBank~\cite{de2019commitmentbank} is a task that can be framed as three-class textual entailment on a corpus of 1,200 naturally occurring discourses.

\textbf{BoolQ.} Boolean Question~\cite{clark2019boolq} is a question answering task where each sample consists of a short passage and a yes/no question about the passage. 

\textbf{WiC.} Word-in-Context~\cite{pilehvar2019wic} is a word sense disambiguation task that aims to predict whether the word is used with the same sense in sentence pairs.

\textbf{WSC.} Winograd Schema Challenge~\cite{levesque2012winograd} is a co-reference resolution task which aims to determine the correct refer-rent of the pronoun from among the provided choices.

\textbf{XSUM.} The Extreme Summarization dataset~\cite{narayan2018don} is one of abstractive Summarization task that aims to convert the given document into a short and adequate summary in the same language.

\textbf{CoNLL2014.} CoNLL2014~\cite{ng2014conll} is a popular grammatical error correction task that aims to rewrite the input sentence with grammatical errors into the corresponding correct sentence, where the original and target sentences have the similar sentence lengths.

\begin{table}[]
\caption{Hyper-parameters settings for BART model on the generation tasks.}
\label{tab_hyper2}
\centering
\begin{tabular}{lcc}
\toprule
\textbf{Setting}        & \textbf{XSUM}   & \textbf{CoNLL2014} \\
\hline \hline
Learning Rate  & 3e-5   & 2e-5      \\
Batch Size     & 56     & 800       \\
Training Steps & 15,000 & 15,000    \\
Warmup Steps   & 500    & 500       \\
GPUs           & 8      & 4   \\
\bottomrule
\end{tabular}
\end{table}
\subsection{Hyper-parameters of Fine-tuning}
\label{appendix_parameters}
In this paper, we fine-tune four different large-scale PLMs with our FSAM on the tasks of GLUE and SuperGLUE, including BERT-large ($\sim$340M)~\footnote{\url{https://huggingface.co/bert-large-cased}}, ELECTRA-large ($\sim$340M)~\footnote{\url{https://huggingface.co/google/electra-large-discriminator}}, ALBERT-xxlarge-v2 ($\sim$223M)~\footnote{\url{https://huggingface.co/albert-xxlarge-v2}} and RoBERTa-large ($\sim$355M)~\footnote{\url{https://huggingface.co/roberta-large}}. Additionally, the BART-large ($\sim$406M)~\footnote{\url{https://huggingface.co/facebook/bart-large}} is used for the generation tasks. The training epochs/steps, batch size, learning rate and warmup steps are listed in Table~\ref{tab_hyper2} and Tabel~\ref{tab_hyper}. Notably, the maximum sequence length of language understanding tasks is set as 128/256. For two generation tasks, we empirically set the minimum and maximum length of the XSUM dataset as 10 and 60, and closely follow~\citet{Chollampatt:18} to preprocess the data of CoNLL2014.

\subsection{Training Curves}
\label{appendix_curves}
In this sub-section, we visualize the training curves of Adam, SAM and FSAM in detail. Specifically, Figure~\ref{fig:training_acc} shows the evaluation metrics \textit{v.s.} training epochs. 
The metric curves prove that FSAM boosts the performance effectively during the training, which shows the effectiveness of FSAM.


\subsection{More Results}
In addition to the results in Table~\ref{tab_main} with the momentum as 0.9, we also conduct the experiments with the momentum as 0 (\textit{i.e.}, $\beta_1 = 0$ in Algorithm~\ref{alg:fsam}) to evaluate the influence of the adaptive learning rate. In practice, we evaluate the performance on several downstream tasks upon two base optimizers (Adam and AMSGrad) and two PLMs (RoBERTa-large and RoBERTa-base). Table~\ref{tab_more_result} shows the results. When there is no momentum term, FSAM also achieves better performance against the vanilla SAM and base optimizers. These results prove the universality of FSAM in various scenarios.

\begin{figure*}[ht]
	\centering
	\includegraphics[width=1\textwidth]{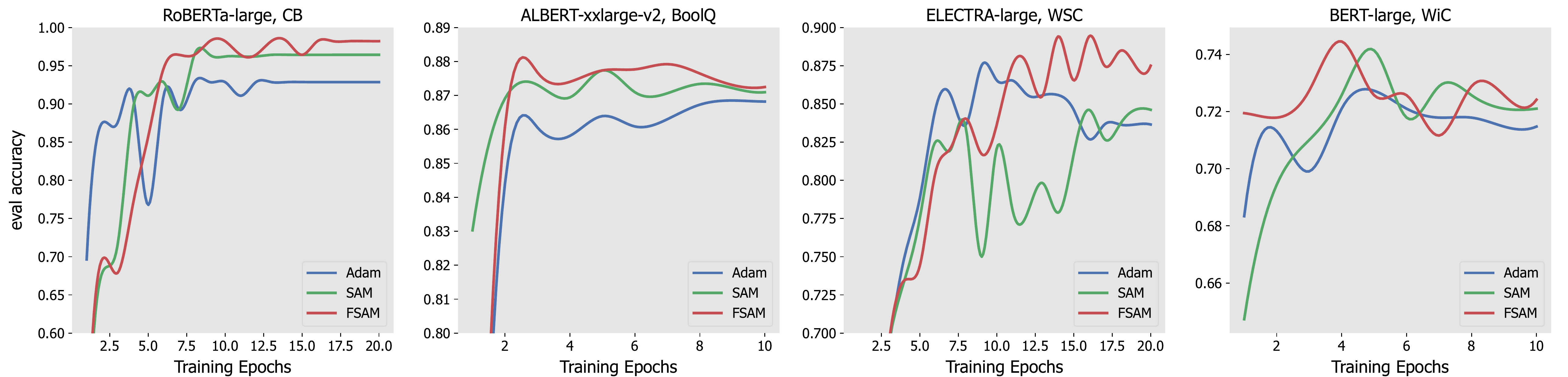} 
	\caption{Accuracy on the dev set \textit{v.s.} training epochs. We show some representative results on the four pretrained models. We can observe that our FSAM achieves the best performance among all tasks.
	}
	\label{fig:training_acc}
\end{figure*}
\begin{table*}[ht]
\caption{Results of RoBERTa-large and RoBERTa-base fine-tuned with different optimizers ($\beta_1 = 0$). We can observe that FSAM consistently outperforms the vanilla SAM and base optimizer among all settings.}
\label{tab_more_result}
\centering
\scalebox{0.78}{
\begin{tabular}{lcccccccccccccc}
\toprule
\multicolumn{1}{c}{\multirow{3}{*}{\textbf{Method}}} & \multicolumn{7}{c}{\textbf{RoBERTa-base, Adam}}                                                                                & \multicolumn{7}{c}{\textbf{RoBERTa-large, Adam}}                                                                               \\ \cmidrule(lr){2-8} \cmidrule(lr){9-15}
\multicolumn{1}{c}{}                        & CoLA          & \multicolumn{2}{c}{MRPC}      & \multicolumn{2}{c}{STS-B}     & RTE           & \multirow{2}{*}{AVG.} & CoLA          & \multicolumn{2}{c}{MRPC}      & \multicolumn{2}{c}{STS-B}     & RTE           & \multirow{2}{*}{AVG.} \\  \cline{2-7} \cline{9-14}
\multicolumn{1}{c}{}                        & \textit{Mcc.}          & \textit{Acc.}          & \textit{F1.}           & \textit{Pear.}       & \textit{Spea.}       & \textit{Acc.}          &                       & \textit{Mcc.}          & \textit{Acc.}          & \textit{F1.}           & \textit{Pear.}       & \textit{Spea.}       & \textit{Acc.}          &                       \\ \hline \hline
Adam                                        & 61.3          & 87.5          & 90.6          & 90.4          & 90.2          & 78.3          & \underline{83.05}                 & 67.0          & 89.0          & 91.9          & \textbf{91.9} & \textbf{91.6} & 88.4          & \underline{86.63}                 \\
Adam+SAM                                    & 61.6          & 88.2          & 91.6          & 90.5          & 90.2          & 79.4          & \underline{83.58}                 & 68.6          & \textbf{90.4} & 93.1          & 90.7          & 90.4          & 88.1          & \underline{86.88}                 \\
Adam+FSAM                                   & \textbf{63.2} & \textbf{88.5} & \textbf{92.0} & \textbf{90.6} & \textbf{90.3} & \textbf{82.7} & \underline{\textbf{84.55}}        & \textbf{69.0} & \textbf{90.4} & \textbf{93.2} & 91.6          & 91.3          & \textbf{88.8} & \underline{\textbf{87.38}}        \\ \midrule
\multicolumn{1}{c}{\multirow{3}{*}{\textbf{Method}}} & \multicolumn{7}{c}{\textbf{RoBERTa-base, AMSGrad}}                                                                                & \multicolumn{7}{c}{\textbf{RoBERTa-large, AMSGrad}}                                                                               \\ \cmidrule(lr){2-8} \cmidrule(lr){9-15}
\multicolumn{1}{c}{}                        & CoLA          & \multicolumn{2}{c}{MRPC}      & \multicolumn{2}{c}{STS-B}     & RTE           & \multirow{2}{*}{AVG.} & CoLA          & \multicolumn{2}{c}{MRPC}      & \multicolumn{2}{c}{STS-B}     & RTE           & \multirow{2}{*}{AVG.} \\  \cline{2-7} \cline{9-14}
\multicolumn{1}{c}{}                        & \textit{Mcc.}          & \textit{Acc.}          & \textit{F1.}           & \textit{Pear.}       & \textit{Spea.}       & \textit{Acc.}          &                       & \textit{Mcc.}          & \textit{Acc.}          & \textit{F1.}           & \textit{Pear.}       & \textit{Spea.}       & \textit{Acc.}          &                       \\  \hline \hline
AMSGrad                                     & 60.1          & 88.5          & 91.6          & 90.3          & 90.2          & 79.1          & \underline{83.30}                 & 63.8          & 89.7          & 92.4          & 90.0          & 90.4          & 87.4          & \underline{85.62}                 \\
AMSGrad+SAM                                 & 60.7          & \textbf{89.2} & \textbf{92.2} & 90.2          & 89.9          & \textbf{80.2} & \underline{83.73}                 & \textbf{68.5} & 90.0          & 92.8          & \textbf{91.6} & \textbf{91.2} & 87.0          & \underline{86.85}                 \\
AMSGrad+FSAM                                & \textbf{62.0} & 88.7          & 91.9          & \textbf{90.5} & \textbf{90.4} & 79.8          & \underline{\textbf{83.88}}        & \textbf{68.5} & \textbf{90.2} & \textbf{92.9} & \textbf{91.6} & 91.0          & \textbf{88.1} & \underline{\textbf{87.05}}      
\\ \bottomrule
\end{tabular}
}
\end{table*}

\begin{table*}[]
\caption{Hyper-parameters settings for different pretrained models on the language understanding tasks. We set the batch size to 16 for all settings. These settings are selected in best practice. Note that we apply these settings to fine-tune both large and small pretrained language models.}
\label{tab_hyper}
\centering
\begin{tabular}{ccccc}
\toprule
\textbf{Model}                    & \textbf{Dataset} & \textbf{Learning Rate} & \textbf{Training Epochs/Steps} & \textbf{Warmup Ratio/Steps} \\ \hline \hline
\multirow{8}{*}{\textbf{BERT}}    & CoLA    & 2e-5          & 3 epochs              & 10\%               \\
                         & MRPC    & 2e-5          & 3 epochs              & 10\%               \\
                         & STS-B   & 4e-5          & 3 epochs              & 10\%               \\
                         & RTE     & 4e-5          & 3 epochs              & 10\%               \\
                         & BoolQ   & 2e-5          & 10 epochs             & 10\%               \\
                         & CB      & 3e-5          & 20 epochs             & 10\%               \\
                         & WiC     & 2e-5          & 10 epochs             & 10\%               \\
                         & WSC     & 1e-5          & 20 epochs             & 10\%               \\ \midrule
\multirow{8}{*}{\textbf{ELECTRA}} & CoLA    & 1e-5          & 3 epochs              & 10\%               \\
                         & MRPC    & 3e-5          & 3 epochs              & 10\%               \\
                         & STS-B   & 3e-5          & 10 epochs             & 10\%               \\
                         & RTE     & 2e-5          & 10 epochs             & 10\%               \\
                         & BoolQ   & 2e-5          & 10 epochs             & 10\%               \\
                         & CB      & 3e-5          & 20 epochs             & 10\%               \\
                         & WiC     & 2e-5          & 10 epochs             & 10\%               \\
                         & WSC     & 1e-5          & 20 epochs             & 10\%               \\ \midrule
\multirow{8}{*}{\textbf{ALBERT}}  & CoLA    & 2e-5          & 3 epochs              & 10\%               \\
                         & MRPC    & 3e-5          & 3 epochs              & 10\%               \\
                         & STS-B   & 3e-5          & 3 epochs              & 10\%               \\
                         & RTE     & 5e-5          & 3 epochs              & 10\%               \\
                         & BoolQ   & 1e-5          & 10 epochs             & 10\%               \\
                         & CB      & 4e-5          & 20 epochs             & 10\%               \\
                         & WiC     & 3e-5          & 10 epochs             & 10\%               \\
                         & WSC     & 1e-5          & 20 epochs             & 10\%               \\ \midrule
\multirow{8}{*}{\textbf{RoBERTa}} & CoLA    & 1e-5          & 2668 steps            & 160 steps          \\
                         & MRPC    & 1e-5          & 1148 steps            & 68 steps           \\
                         & STS-B   & 2e-5          & 1799 steps            & 107 steps          \\
                         & RTE     & 2e-5          & 1018 steps            & 61 steps           \\
                         & BoolQ   & 1e-5          & 10 epochs             & 10\%               \\
                         & CB      & 2e-5          & 20 epochs             & 10\%               \\
                         & WiC     & 2e-5          & 10 epochs             & 10\%               \\
                         & WSC     & 1e-5          & 20 epochs             & 10\%         \\
\bottomrule
\end{tabular}
\end{table*}

\end{document}